%% file: looped_llms.tex
\documentclass{article}

\usepackage{microtype}
\usepackage{graphicx}
\usepackage{subfigure}
\usepackage{booktabs} 
\usepackage{tcolorbox}
\usepackage{tabularx}
\usepackage{rotating}
\usepackage{placeins}
\usepackage{cancel}
\usepackage{physics}
\usepackage[hyphens]{url}

\usepackage{hyperref}

\usepackage[accepted]{icml2025}

\usepackage{amsmath}
\usepackage{amssymb}
\usepackage{mathtools}
\usepackage{amsthm}

\usepackage[capitalize,noabbrev]{cleveref}

\input{math_commands.tex}

\theoremstyle{plain}
\newtheorem{theorem}{Theorem}[section]
\newtheorem{proposition}[theorem]{Proposition}

\theoremstyle{definition}

\theoremstyle{remark}

\usepackage{etoc}

\usepackage[textsize=tiny]{todonotes}
\Crefname{equation}{Eq.}{Eqs.}
\crefname{equation}{Eq.}{Eqs.}

\Crefname{figure}{Fig.}{Figs.}
\crefname{figure}{Fig.}{Figs.}

\Crefname{section}{Sec.}{Secs.}
\crefname{section}{Sec.}{Secs.}

\Crefname{appendix}{App.}{Apps.}
\crefname{appendix}{App.}{Apps.}

\definecolor{darkmagenta}{rgb}{0.56, 0.0, 1.0} 
\usepackage{multirow}
\usepackage{inconsolata}
\usepackage{changepage} 
\usepackage{newtxmath}
\hypersetup{colorlinks,linkcolor={orange},citecolor={darkmagenta},urlcolor={orange}} 
\hypersetup{colorlinks,linkcolor={blue},citecolor={darkmagenta},urlcolor={blue}} 

\icmltitlerunning{A Mechanistic Analysis of Looped Language Models}

\makeatletter
\def\addcontentsline#1#2#3{%
\addtocontents{#1}{\protect\contentsline{#2}{#3}{\thepage}{\@currentHref}{}}}
\makeatother
\begin{document}
\etocdepthtag.toc{mtoc}
\etocsettagdepth{mtoc}{none}
\etocsettagdepth{appendix}{none}

\newbox\dummybox
\setbox\dummybox\vbox{%
  \tableofcontents 
}

\definecolor{mplwinterblue}{RGB}{0, 0, 219}
\definecolor{mplwintergreen}{RGB}{0, 219, 96}
\definecolor{mplsummergreen}{RGB}{0, 96, 72}
\definecolor{mplsummeryellow}{RGB}{219, 219, 91}

\twocolumn[
\icmltitle{A Mechanistic Analysis of Looped Reasoning Language Models}



\icmlsetsymbol{equal}{*}

\begin{icmlauthorlist}
\icmlauthor{Hugh Blayney}{Ox}
\icmlauthor{Álvaro Arroyo}{Ox}
\icmlauthor{Johan Obando-Ceron}{Mila,Mont}
\\
\icmlauthor{Pablo Samuel Castro}{Mila,Mont} 
\icmlauthor{Aaron Courville}{Mila,Mont}
\icmlauthor{Michael Bronstein}{Ox,AITHYRA}
\icmlauthor{Xiaowen Dong}{Ox}
\end{icmlauthorlist}

\icmlaffiliation{Ox}{University of Oxford}
\icmlaffiliation{Mila}{Mila – Quebec AI Institute}
\icmlaffiliation{Mont}{Universit\'e de Montr\'eal}
\icmlaffiliation{AITHYRA}{AITHYRA}
\icmlcorrespondingauthor{Hugh Blayney}{hugh@robots.ox.ac.uk}
\icmlcorrespondingauthor{Álvaro Arroyo}{alvaro.arroyo@eng.ox.ac.uk}

\icmlkeywords{Machine Learning, ICML}

\vskip 0.3in
]



\printArXivAffiliationsAndNotice{}

\begin{abstract}
    Reasoning has become a central capability in large language models. Recent research has shown that reasoning performance can be improved by looping an LLM’s layers in the latent dimension, resulting in \textit{looped reasoning language models}. Despite promising results, few works have investigated how their internal dynamics differ from those of standard feedforward models. In this paper, we conduct a mechanistic analysis of the latent states in looped language models, focusing in particular on how the \textit{stages of inference} observed in feedforward models compare to those observed in looped ones. To this end, we analyze cyclic recurrence and show that for many of the studied models each layer in the cycle converges to a distinct fixed point; consequently, the recurrent block follows a consistent cyclic trajectory in the latent space. We provide evidence that as these fixed points are reached, attention-head behavior stabilizes, leading to constant behavior across recurrences. Empirically, we discover that recurrent blocks learn stages of inference that closely mirror those of feedforward models, repeating these stages in depth with each iteration. We study how recurrent block size, input injection, and normalization influence the emergence and stability of these cyclic fixed points. We believe these findings help translate mechanistic insights into practical guidance for architectural design.    
\end{abstract}

\section{Introduction}\label{sec:intro}
\input{sections/introduction}

\section{Preliminaries}\label{sec:prelim}
\input{sections/preliminaries}

\section{Related Work}\label{sec:related_work}
\input{sections/related_work}

\begin{figure*}[ht]
    \centering
    \includegraphics[width=0.8\linewidth]{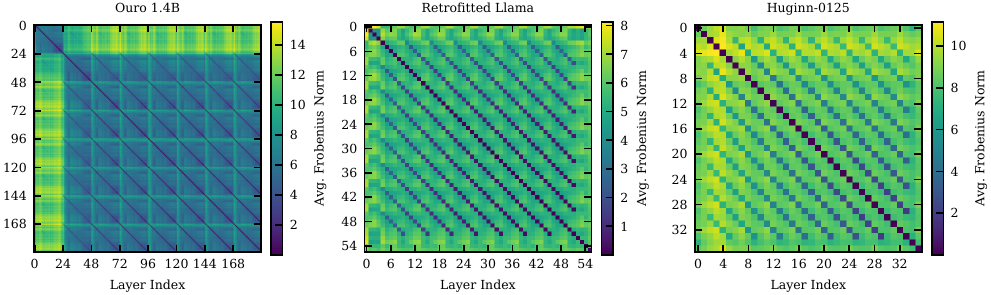}
    \vspace{-0.3cm}
    \caption{\textbf{Frobenius norm between attention patterns at different depths}, averaged across the batch and head dimensions. Depth index visualized on each axis, cells show the norms between attention patterns at each pair of depth indices. \textbf{Left:} Ouro 1.4B \citep{zhu2025scaling}. \textbf{Center:} Retrofitted Llama \citep{mcleish2025teaching}. \textbf{Right:} \raven{} \citep{geiping2025scaling}. All models looped 8 times.}
    \vspace{-0.3cm}
    \label{fig:similar_attention_patterns}
\end{figure*}

\section{Looped Transformers Tend Towards the Same Attention Patterns}\label{sec:cyclic_similarity}

Recent work \citep{bai2019deep,bansal2022end,anil2022path} has noted that weight-tied Transformer models often tend towards consistent behavior with repeated iterations. Often\footnote{As originally observed by \citet{geiping2025scaling}, other forms of limiting behavior can occur. In \Cref{appen:orbits_and_sliders} we show that 1) this is rare, the vast majority of tokens reach a fixed-point and 2) even with other limiting behavior, stages of inference remain constant.}, this takes the form of convergence to a fixed point $\mX' = \stack_k(\mX')$. We motivate our work by noting that if this is true for a model with cyclic recurrence, it is also true cyclically:

\begin{proposition}[Cyclic recurrent blocks reach cyclic fixed points]\label{prop:cyclic_fixed_point}
    Let $(l, k)$-Recurrent block reach a fixed point $\mX'$ such that $\stack_k(\mX') = \mX'$. Then any cyclic permutation of blocks $1, \dots, k$ will also have reached a fixed point.
\end{proposition}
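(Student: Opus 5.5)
We write the recurrent block as a composition $\stack_k = f_k \circ f_{k-1} \circ \cdots \circ f_1$, where $f_i$ denotes the $i$-th block (layer) of the cycle. If input injection is present, each $f_i$ is understood with the fixed injected embedding absorbed into it. This is legitimate because the injection does not change across recurrences, so each $f_i$ is a fixed map on the latent space. A cyclic permutation starting at block $j$ is then the composition
\[
\stack_k^{(j)} = f_{j-1} \circ \cdots \circ f_1 \circ f_k \circ \cdots \circ f_j .
\]
The plan is to exhibit an explicit fixed point for each $\stack_k^{(j)}$, namely the intermediate state of the original cycle just before block $j$ acts.

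Concretely, we define
\[
\mX'_1 = \mX', \qquad \mX'_{i+1} = f_i(\mX'_i) \quad \text{for } i = 1,\dots,k .
\]
By hypothesis, $\mX'_{k+1} = \stack_k(\mX') = \mX'$, so indices can be read modulo $k$. We then claim that $\stack_k^{(j)}(\mX'_j) = \mX'_j$, and verify it by tracing the composition:
\begin{itemize}
\item applying $f_j,\dots,f_k$ to $\mX'_j$ produces $\mX'_{k+1} = \mX'$;
\item applying $f_1,\dots,f_{j-1}$ to $\mX'$ then produces $\mX'_j$.
\end{itemize}
Equivalently, in functional form: write $A = f_{j-1}\circ\cdots\circ f_1$ and $B = f_k\circ\cdots\circ f_j$. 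Then $\stack_k = B\circ A$ and $\stack_k^{(j)} = A\circ B$. The identity $(A\circ B)(A\mX') = A\bigl((B\circ A)\mX'\bigr) = A\mX'$ gives the claim directly.

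This is the standard fact that $AB$ and $BA$ share fixed points, transported via $A$. It also yields the interpretation used in the paper: each layer position in the cycle has its own distinct fixed point $\mX'_j$. The trajectory therefore repeats cyclically as $\mX'_1 \to \mX'_2 \to \cdots \to \mX'_k \to \mX'_1$.

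The only potential obstacle is definitional rather than mathematical. One must make sure that the blocks are genuinely fixed maps independent of the recurrence index, so that "reaching a fixed point" is meaningful for the cyclically permuted block. This requires two checks:
\begin{itemize}
\item any input injection or normalization depends only on the fixed input and the current state, and not on the iteration count;
\item the composition order in the paper's definition of $(l,k)$-recurrence matches the one above.
\end{itemize}
Once these conventions are fixed, the argument is the one-line computation above.
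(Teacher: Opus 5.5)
Your proof is correct and is essentially the paper's argument. The paper applies a single block to both sides of the fixed-point equation to shift the cycle by one position and then inducts on the shift. You instead apply the whole prefix $A = f_{j-1}\circ\cdots\circ f_1$ at once, which collapses that induction into the identity $(A\circ B)(A\mX') = A\mX'$ and names the fixed point $\mX'_j$ explicitly.

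One aside should be softened: your claim that each position has a ``distinct'' fixed point. Nothing forces the $\mX'_j$ to differ. The paper's pre-norm runs without input injection show exactly this degenerate case, where they coincide. They are only not necessarily equal.
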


We highlight however that these fixed points are not necessarily the same: the action of each successive layer doesn't necessarily result in the same point, and the cycle of layers can instead trace out an arbitrary cycle in latent space. Indeed, in \Cref{subsec:fixed_point_empirical} we demonstrate that this \emph{cyclic fixed point} behavior -- illustrated in \Cref{fig:random_trajectory} -- is observed frequently in practice. The alternative, where all layers result in the same fixed point, requires that the action of each Transformer block tends to zero.


Convergence to this cyclic behavior implies that the residual stream tends towards being similar across recurrences. Given that block weights are also shared across recurrent iterations -- and assuming that the inputs to each block are bounded\footnote{This is a reasonable assumption since all models considered in this work apply a norm before the attention block.} -- this implies that the attention patterns will converge, as shown in \Cref{prop:similar_attention_patterns}.

\begin{proposition}[Recurrent attention patterns change slowly under state convergence]\label{prop:similar_attention_patterns}
Fix a layer $\ell$ in the recurrent block and consider its attention weight matrices to be tied across recurrences (so $\mW_{Q,\ell},\mW_{K,\ell}$ are the same for all $t$). 
Let $\|\cdot\|$ denote a submultiplicative matrix norm that is invariant under transposition (e.g. Spectral or Frobenius norms). Assume the corresponding attention inputs are bounded under this norm as $\|\mX_{\ell,t}\|\le B$ for all $t$. Define $\kappa_\ell = \|\mW_{Q,\ell}\mW_{K,\ell}^\top\|$.  
Then, writing $\mathcal S_\ell(\mX):=\softmax(A_\ell(\mX))$ with $A_\ell(\cdot)$ as defined above, for any $t\ge 1$,
\begin{equation*}\label{eq:recurrent_attention_lip}
\big\|\mathcal S_\ell(\mX_{\ell,t})-\mathcal S_\ell(\mX_{\ell,t-1})\big\|
\;\le\;
L_{\mathrm{sm}}\,
\frac{2B\,\kappa_\ell}{\sqrt d}\,
\big\|\mX_{\ell,t}-\mX_{\ell,t-1}\big\|,
\end{equation*}
where $L_{\mathrm{sm}}$ is a Lipschitz constant of the row-wise softmax with respect to the chosen norm. 
\end{proposition}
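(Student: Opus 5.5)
We will use that $A_\ell(\mX)$ is the scaled pre-softmax score matrix, presumably $A_\ell(\mX)=\tfrac{1}{\sqrt d}\,\mX\mW_{Q,\ell}\mW_{K,\ell}^\top\mX^\top$, as defined in \Cref{sec:prelim}. The proof then composes two Lipschitz bounds. First we use the Lipschitz property of the row-wise softmax, with constant $L_{\mathrm{sm}}$, to reduce the claim to a bound on the pre-softmax scores: $\|\mathcal S_\ell(\mX_{\ell,t})-\mathcal S_\ell(\mX_{\ell,t-1})\|\le L_{\mathrm{sm}}\|A_\ell(\mX_{\ell,t})-A_\ell(\mX_{\ell,t-1})\|$. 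Second, we show that $A_\ell$ is Lipschitz on the ball $\{\|\mX\|\le B\}$ with constant $2B\kappa_\ell/\sqrt d$. Weight tying across recurrences enters only here: the same matrix $\mM:=\mW_{Q,\ell}\mW_{K,\ell}^\top$ appears at both $t$ and $t-1$, so the difference depends only on the inputs.

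For the second step, write $\mX=\mX_{\ell,t}$ and $\mY=\mX_{\ell,t-1}$. We use the standard add-and-subtract decomposition for a bilinear form:
\begin{equation*}
\mX\mM\mX^\top-\mY\mM\mY^\top=(\mX-\mY)\mM\mX^\top+\mY\mM(\mX-\mY)^\top .
\end{equation*}
Submultiplicativity bounds the first term by $\|\mX-\mY\|\,\kappa_\ell\,\|\mX^\top\|$. Transposition invariance turns $\|\mX^\top\|$ into $\|\mX\|\le B$. The second term is bounded in the same way by $\|\mY\|\,\kappa_\ell\,\|(\mX-\mY)^\top\|\le B\kappa_\ell\|\mX-\mY\|$. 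Adding the two terms and multiplying by $1/\sqrt d$ gives $\|A_\ell(\mX)-A_\ell(\mY)\|\le \tfrac{2B\kappa_\ell}{\sqrt d}\|\mX-\mY\|$. Combining this with the first step yields the stated inequality.

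There are two small points to check. The first is that any causal mask is applied identically at both iterations, and the masked entries are additive constants (or $-\infty$). The differences of the unmasked entries are therefore a submatrix or entrywise restriction of the above, and the bound still holds. If the paper's $A_\ell$ also includes biases or RoPE, we will absorb them into $\mM$ or note that they are fixed linear maps that cancel in the difference. The second point, which we expect to be the main obstacle, is the softmax step. The statement takes $L_{\mathrm{sm}}$ as given "with respect to the chosen norm", so we only need to justify that such a finite constant exists. We will argue this row by row: the Jacobian of softmax on a row is $\mathrm{diag}(p)-pp^\top$, whose spectral norm is at most $1/2$, so each row map is $\tfrac12$-Lipschitz in $\ell_2$. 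For the Frobenius norm this gives $L_{\mathrm{sm}}\le 1/2$ by summing squared row bounds. For other norms, equivalence of norms on the finite-dimensional space supplies some finite $L_{\mathrm{sm}}$, and this is all the proposition asserts.
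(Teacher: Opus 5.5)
Your proposal is correct and follows the paper's proof essentially line for line. It uses the same reduction through a row-wise softmax Lipschitz constant, the same identity $\mX\mM\mX^\top-\mY\mM\mY^\top=(\mX-\mY)\mM\mX^\top+\mY\mM(\mX-\mY)^\top$, and the same use of submultiplicativity and transpose invariance to get the constant $2B\kappa_\ell/\sqrt d$.

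The one substantive difference is that you justify $L_{\mathrm{sm}}$ yourself (the $\tfrac12$ Jacobian bound for the Frobenius norm, norm equivalence otherwise), whereas the paper simply cites $L_{\mathrm{sm}}=1/2$. Yours is the more careful choice, since $1/2$ is a Frobenius (row-wise $\ell_2$) constant and does not carry over to the spectral norm. Your masking/RoPE/bias asides are not needed for the statement as posed. The RoPE/bias one would also not go through as written: RoPE is not a single fixed $\mM$, and bias cross-terms do not cancel in the difference.
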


Since these attention patterns are characteristic of the different mixing stages of inference (defining, for example, ColSum concentration), we see that mixing behavior will tend towards being constant across recurrences.

\subsection{Empirical Validation}\label{subsec:fixed_point_empirical}

We focus our attention on three different pretrained looped language models: \ouro{} 1.4B \citep{zhu2025scaling}, \raven{} \citep{geiping2025scaling}, and Llama with retrofitted recurrence \citep{mcleish2025teaching}. Additional models can be found in \Cref{appen:additional_fixed_point_results}, with architecture and training choices summarized in \Cref{tab:looped_model_details}. These models cover a range of design choices; later in \Cref{subsec:norm_and_ii} we will isolate the impact of these architectural differences. Except where otherwise specified, all results are visualized on the same random subset of 256 examples from the GSM8k test set; additional results targetting non-reasoning behavior are presented in \Cref{appen:hellaswag_soi}, but we observe no significant changes and the conclusions of the main text remain unchanged.

\begin{figure}[ht]
    \centering
    \includegraphics{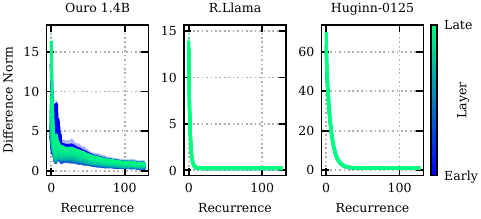}
    \vspace{-0.4cm}
    \caption{\textbf{Norm of the difference between the residual stream after successive recurrences of the same layer.}}
    \vspace{-0.4cm}
    \label{fig:successive_norms}
\end{figure}

We start by plotting the norm of the residual stream difference between subsequent iterations in \Cref{fig:successive_norms} (and the same for cosine similarities in \Cref{fig:successive_cosine_similarities}). This validates the starting assumption of \Cref{prop:similar_attention_patterns} that looped models tend towards behavior in which the layerwise residual stream does not significantly change between recurrences.

For each of these models, we plot Frobenius norm between the realized attention matrices at different layers, for 8 loops of the recurrent block in \Cref{fig:similar_attention_patterns}. The diagonal patterns of high similarity demonstrate that the attention matrices of any given layer are most similar to those of the same layer at different recurrences -- as predicted by \Cref{prop:similar_attention_patterns}. We note that this convergence towards similar attention patterns occurs \emph{remarkably quickly}: for looped \ouro{} attention patterns appear to converge after the first iteration, and both \raven{} and the retrofitted Llama model demonstrate this cyclic behavior immediately following the prelude.


\begin{figure}[ht]
    \centering
    \includegraphics[width=\linewidth]{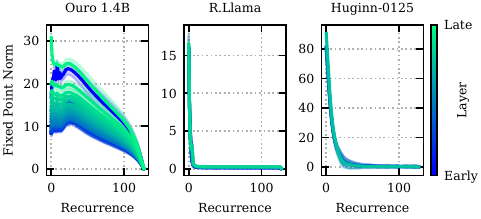}
    \vspace{-0.8cm}
    \caption{\textbf{Norm of the difference between the residual stream after each layer in the recurrent block and its ``approximate fixed point''} - the residual stream after that layer in the 128th recurrence. While \raven{} and retrofitted Llama quickly reach a fixed point, \ouro{} does not - despite small successive differences evidenced by \Cref{fig:successive_norms}.}
    \label{fig:fixed_point_norm}
    \vspace{-0.2cm}
\end{figure}

\begin{figure}[ht]
    \centering
    \includegraphics[width=0.9\linewidth]{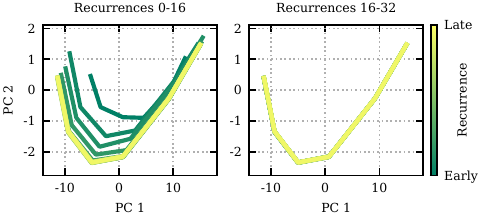}
    \vspace{-0.5cm}
    \caption{\textbf{Retrofitted Llama \citep{mcleish2025teaching} latent space trajectory traced out by the hidden states of the final sequence position} on a single test prompt; reduced to two dimensions by computing PCA over all final sequence position embeddings. Trajectories perfectly overlap in the second plot, demonstrating that a cyclic fixed point has been reached.}
    \vspace{-0.3cm}
    \label{fig:retro_llama_trajectory}
\end{figure}

However, despite the tendency visible in \Cref{fig:successive_norms} towards small changes between successive iterations we discover that it is \textbf{not} the case that looped models always reach a fixed point, or even consistent limiting behavior: for each model we find an ``approximate fixed point'' per layer by iterating 128 times, then compute the norm of the difference between the output of each layer at every recurrence and its corresponding fixed point. This is visualized in \Cref{fig:fixed_point_norm}. We see that while \raven{} and Retrofitted Llama demonstrate fast convergence to a fixed point, \ouro{} does not. As discussed by \citet{bansal2022end,anil2022path}, this supports the suggestion that input injection encourages fixed-point convergence: we investigate this further in \Cref{subsec:norm_and_ii}.

Where a model does reach a fixed point, this implies that the action of the entire recurrent block tends towards tracing out a consistent cycle in latent space. We visualize this for retrofitted Llama in \Cref{fig:retro_llama_trajectory}.

\subsection{Impact of Architecture Choices}\label{subsec:norm_and_ii}

Several existing works \citep{bansal2022end,anil2022path} have noted that input injection is important in order for a recurrent model to reach a fixed point: in this section we replicate this finding and supplement with additional insights on the impact of norm structure in reaching a fixed point. We conduct a series of experiments on \emph{randomly initialized} models. These demonstrate similar cyclic behavior to their trained counterparts, suggesting that behavior observed here is likely to generalize to the cyclic behavior of trained models. We compare pre-norm (used by the retrofitted recurrent models) and the norms used by the \raven{} and \ouro{} models, testing each both with and without input injection: in this way we test the most significant architectural differences between the pretrained Looped models tested; see \Cref{tab:looped_model_details} for details. Each model has 12 layers with no prelude or coda; see \Cref{fig:stability_initialised_fixed_point_cosine_similarities_multiple_layers} for alternative configurations.

Our results are visualized in \Cref{fig:summarised_stability}, where we visualise the mean over 3 random model initializations for each configuration. We see that input injection results in stable fixed point behavior for all norm types other than \ouro{}, whereas omitting input injection means that only pre-norm reaches a stable fixed point. However, this fixed point reached by pre-norm without input injection is a ``degenerate'' one: each layer converges to the \emph{same} fixed point. This can be determined from the rightmost frame of \Cref{fig:summarised_stability}, which demonstrates that the \emph{lowest} cosine similarity between the first layer and any other layer's fixed point still converges to 1.

\begin{figure}
    \centering
    \includegraphics{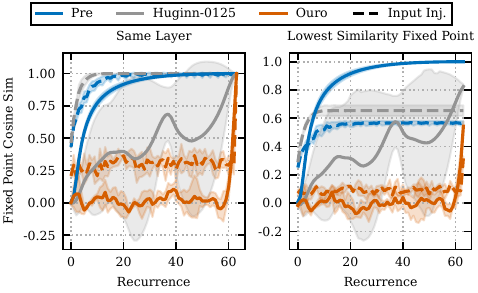}
    \vspace{-0.4cm}
    \caption{\textbf{Cosine similarity between residual streams after each layer and the approximate fixed point} for a range of norms, with and without input injection. Each model is randomly initialized with 12 layers. Cosine similarity is taken between the residual stream after the first layer at each recurrence and \textbf{left:} the approximate fixed point of the first layer, \textbf{right:} the approximate fixed point of the layer with the \emph{lowest} cosine similarity to the first layer.}
    \vspace{-0.3cm}
    \label{fig:summarised_stability}
\end{figure}

\section{Stages of Inference in Looped Models Mirror Feedforward Computation}\label{sec:stages_of_inference}
The previous section shows that, empirically, a wide range of models converge to a regime in which attention patterns within individual layers change only minimally across recurrences. As a result, attention dynamics in looped Transformers are constrained in depth, since layers are cyclically weight-tied to earlier ones. This behavior contrasts with feedforward Transformers, which impose no such constraints and exhibit sharp, layer-wise changes in attention patterns across depth. Prior work has linked these sharp transitions to characteristic stages of inference \citep{lad2024remarkable,queipo2025attention}, introduced in \Cref{subsec:stages_of_inference_prelim}. In this section, we study how cyclic weight sharing alters these stages of inference in looped Transformers. In the main text we frame our analysis using \emph{ColSum concentration}, a metric for identifying stages of inference introduced in \Cref{subsec:stages_of_inference_prelim}, with extensive additional results in \Cref{appen:cyclic_soi}.


We visualize ColSum concentration over the realized depth of Retrofitted Llama in \Cref{fig:stages_of_inference_cyclic_and_similar}, revealing consistent mixing cycles that repeat with every iteration of the recurrent block. However, \emph{each individual layer} (solid colorful lines) changes very little in realized depth: after an initial transitory phase they quickly converge towards constant behavior.

\begin{figure}[ht]
    \centering    
    \includegraphics{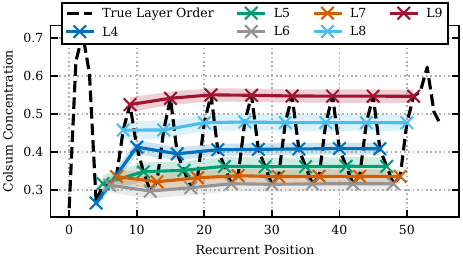}
    \vspace{-0.4cm}
    \caption{\textbf{Stages of inference for retrofitted Llama \citep{mcleish2025teaching} with 8 recurrences.} Individual layers are visualized as solid lines; successive layers in the looped Transformer as a dashed black line. Individual layers quickly converge towards constant behavior, and the cyclic action of these layers results in cyclic stages of inference.}
    \vspace{-0.3cm}
    \label{fig:stages_of_inference_cyclic_and_similar}
\end{figure}

Instead of occurring throughout the realized depth of the looped model, we find that the familiar feedforward stages of inference occur \emph{within} each looped block. \Cref{fig:summarised_recurrent_block_stages} demonstrates that ColSum concentration within each looped block closely resembles that of feedforward models. We draw attention to two observations: 1) Ouro 1.4B, despite being trained \emph{from scratch} with recurrence, mirrors Llama mixing stages and 2) the retrofitted models closely follow the stages of inference of their associated base model, but the initial and final stages are performed only once by the prelude and coda respectively, while the ``middle'' stages are repeated in the recurrent block. It is particularly remarkable that these stages of inference appear in each \ouro{} recurrent block when pretraining from initialization; we discuss further the formation of stages of inference, and attempt to isolate their formation from specific training procedures, in \Cref{sec:self_organising_soi}.

\begin{figure*}[ht]
    \centering    
    \includegraphics[width=0.9\linewidth]{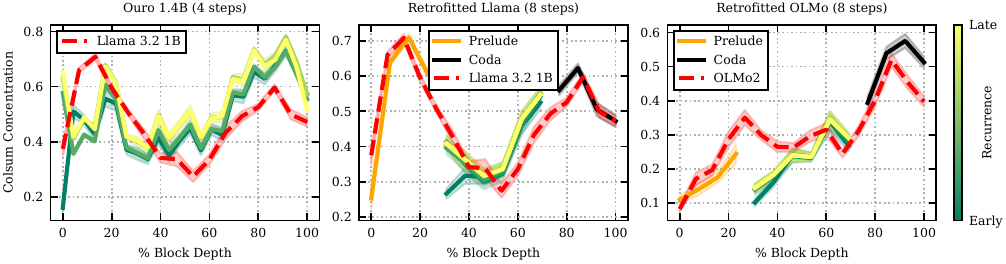}
    \vspace{-0.1cm}
    \caption{\textbf{Stages of inference for each recurrent loop in} \textbf{left:} Ouro 1.4B \citep{zhu2025scaling} \textbf{center:} retrofitted Llama and \textbf{right}: retrofitted OLMo \citep{mcleish2025teaching}. Ouro 1.4B resembles Llama stages of inference, and the two retrofitted to their associated base models.}
    \vspace{-0.5cm}

    \label{fig:summarised_recurrent_block_stages}
\end{figure*}

However, \raven{} does \emph{not} demonstrate clear stages of inference (\Cref{fig:raven_recurrent_block_stages}). We suggest that this is likely due to the specific norm structures used by these models (\Cref{tab:looped_model_details}). \raven{} and \ouro{} both use a ``sandwich'' norm structure, but \raven{} implements this by normalizing the residual streams whereas \ouro{} instead normalizes the outputs of the attention and MLP units, only normalizing the residual stream at the end of each recurrent block. We demonstrate the impact of the different norms by plotting residual stream magnitudes for a range of models in \Cref{fig:residual_norm_comparison}. This means that \raven{} is unable to develop the growth in residual stream magnitude that \citet[Section 3.3]{queipo2025attention} note \emph{causes} compression behavior, leading to sink formation and stages of inference.

\begin{figure}[ht]
    \centering
    \includegraphics{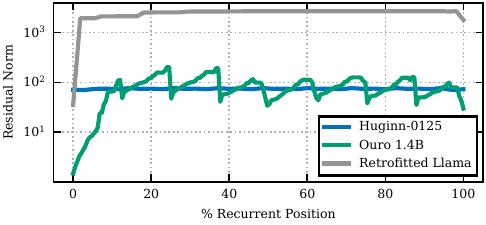}
    \vspace{-0.9cm}
    \caption{\textbf{Norms of the residual stream for a range of models,} demonstrating that \raven{} is unable to develop the activation magnitude changes required for stages of inference due to its repeated normalization of the residual stream.}
    \label{fig:residual_norm_comparison}
\end{figure}

\begin{figure*}[!t]
    \centering
    \includegraphics[width=0.9\linewidth]{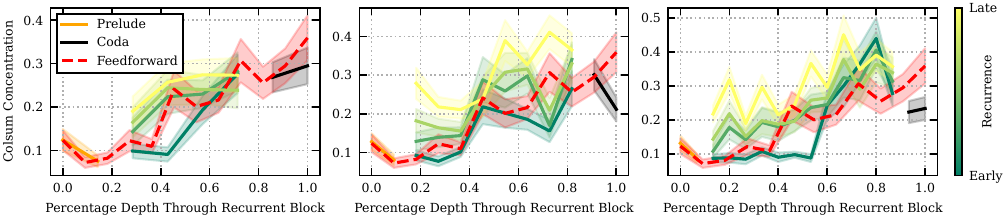}
    \vspace{-0.4cm}
    \caption{\textbf{ColSum concentrations for small-scale trained Looped Transformers with a simplified loss function and constant train recurrence schedule of 4 recurrences}. Also visualized in red is a ``control'' feedforward Transformer of depth 12. All models have 2 prelude and 2 coda layers with no input injection, \textbf{left:} 4 recurrent layers, \textbf{center:} 8 recurrent layers, \textbf{right:} 12 recurrent layers.}
    \label{fig:summary_training_stages_of_interest}
\end{figure*}

\subsection{Self-Organization Into Stages of Inference}\label{sec:self_organising_soi}
An open question from our analysis of pretrained models (\Cref{fig:summarised_recurrent_block_stages}) is whether stages of inference emerge naturally during pretraining, or whether they are instead induced by specific aspects of the training procedure. Several mechanisms may introduce an implicit bias toward feedforward stages of inference: retrofitted models \citep{mcleish2025teaching} may inherit the inference stages of the underlying base model; models trained with recurrence schedulers that permit a single recurrence \citep{geiping2025scaling,mcleish2025teaching} are partially optimized as feedforward models; training objectives that decompose into separate loss terms for each recurrence \citep{zhu2025scaling} partially correspond to feedforward model training.


Therefore in this section we seek to investigate whether stages of inference can arise in Looped Transformers \emph{without} these training biases. To explore this we pre-train several small-scale Looped Transformers, explicitly removing the biases above: we pre-train from scratch with a constant recurrence of 4, using a standard loss that considers only the final latent state when predicting the next token. Our code and training procedure are adapted from \citet{nanochat}; additional details can be found in \Cref{appen:experiment_details}.

ColSum concentrations for these trained Looped Transformers with configurations $(2, 4\otimes 4, 2)$, $(2, 8\otimes 4, 2)$ and $(2, 12\otimes 4, 2)$ are plotted in \Cref{fig:summary_training_stages_of_interest}. We compare these to a ``control'' feedforward network without recurrence, of depth 12. These experiments are on a small scale and as such need to be treated with caution. However, they appear to provide initial evidence that -- even without training methods that may bias towards feedforward stages of inference -- looped models have a tendency to \emph{self-organize} into multiple different mixing stages in recurrent depth, which resemble feedforward stages. The fact that the optimization of these models results in these mixing stages of inference suggests that they are beneficial to language modeling even when applied repeatedly in recurrent depth. We additionally test the impact of input injection and sandwich layers in \Cref{appen:architecture_stages_of_inference}.

\subsection{Stability to Unseen Numbers of Recurrences}\label{sec:stability}

\Cref{sec:cyclic_similarity} establishes that some looped Transformer architectures converge to a fixed point (such as the retrofitted series and \raven{}) while others do not (such as Ouro). However, when evaluating these looped Transformers within the range of recurrences on which they were trained (\Cref{sec:stages_of_inference}), we see extremely similar behavior: convergence to feedforward stages of inference within each recurrent block.

In this section, we demonstrate that a significant difference arises when generalizing to unseen test-time recurrence depths: models that do not reach a fixed point exhibit \textbf{unstable stages of inference}. Conversely, models with recurrent-block-wise stages of inference that also converge to a fixed point are guaranteed to keep enacting these stages of inference for arbitrary test time recurrences.

We first verify this stability in \Cref{fig:stability_in_total_recurrence}, an extended plot of \Cref{fig:stages_of_inference_cyclic_and_similar}. This demonstrates that for retrofitted Llama (results hold for other models using input injection), each individual layer quickly reaches a stable states and then exhibits consistent stages of inference behavior for an arbitrary number of test time recurrences. However, Ouro 1.4B does not exhibit this behavior, with individual layers changing continuously throughout later recurrences. The effect that this has on stages of inference is visualized in \Cref{fig:stability_in_percentage_depth}.
\begin{figure}[ht]
    \centering\includegraphics{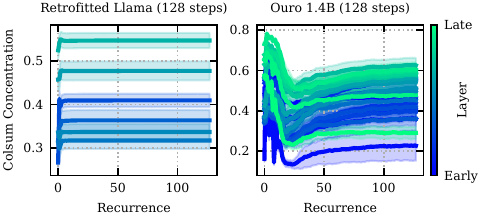}
    \vspace{-0.4cm}
    \caption{\textbf{Colsum concentration of each layer} with successive recurrences for \textbf{left:} retrofitted Llama and \textbf{right:} Ouro 1.4B, both using 128 recurrences. While the layers of retrofitted Llama quickly converge to constant ColSum concentration, the layers of Ouro continually change throughout the recurrences tested.}
    \label{fig:stability_in_total_recurrence}
\end{figure}
\begin{figure}[ht]
    \centering\includegraphics{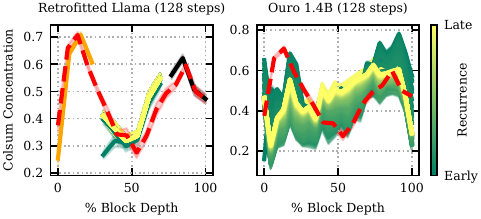}
    \vspace{-0.3cm}
    \caption{\textbf{Colsum concentration of each layer vs the percentage depth at which that layer appears in the recurrent block.} \textbf{Left:} retrofitted Llama and \textbf{right:} Ouro 1.4B, both using 128 recurrences. Feedforward Llama shown in dashed red.}\label{fig:stability_in_percentage_depth}
\end{figure}

 Existing research implies that this stability correlates with out-of-domain performance. Models which exhibit ``stable'' stages of inference for arbitrary test time iterations also \emph{avoid performance deterioration} in this extrapolation regime: whereas extrapolation beyond training recurrences harms the performance of \ouro{} \citep[Tab. 10]{zhu2025scaling}, \raven{} performance remains constant in this extrapolation region \citep[Fig. 1]{geiping2025scaling}.

\section{Conclusion}\label{sec:conclusion}

This paper examines the limiting behavior of Looped Transformers, exploring implications for ``mixing'' stages of inference observed in feedforward models. We demonstrate across a range of architectures that recurrent blocks tend to ``mirror'' the stages of a feedforward Transformer, and provide evidence that this may be emergent behavior learned during training, even when not explicitly encouraged by the training process. We further investigate the implications for these mixing stages when models converge to a stable fixed point, and when they do not.

\paragraph{Implications of Findings}

The implications of our findings are bidirectional. On the one hand, the structure of looped architectures provides a novel lens to study stages of inference, while tracking these stages simultaneously reveals the internal mechanics of recurrent depth. In particular, since looped models decouple functional depth from parameter count, they provide an interesting new perspective on \emph{why} these stages of inference form: previous work had suggested that these stages exist to mitigate the ``harms'' of transformer depth \citep{barbero2025llms,queipo2025attention}, however our work shows that looped models develop these same stages while simultaneously improving performance with greater recurrent depth. On the other hand, our findings that looped models exhibit (and self-organize into) similar stages of inference to feedforward models means that insights from the feedforward setting can be applied to looped models: predictable stages offer actionable pathways for efficient architectural design, including stage-dependent attention sparsification and the leaner parameterization of middle-stage MLPs where representations are reliably compressed and low-rank.

\paragraph{Limitations and Future Work}

We focus exclusively on cyclic recurrence as this appears to be the dominant approach in the literature. However, this means our analysis does not extend to sequential recurrence with multiple separate recurrent blocks; for analysis of this setting we refer the reader to \citet{pappone2025two}. Despite empirically investigating the architectural choices that result in stable limiting behavior in looped models, we have not established analytically why this is the case, nor whether this stable limiting behavior is desirable or restrictive for reasoning tasks.

\clearpage
\section*{Impact Statement}

Ethical aspects and future societal consequences of this particular work are limited. The goal of our work is to advance understanding of looped Language Models, which themselves seem to demonstrate strong reasoning performance; as such, our work is in support of a field that has potential societal consequences if future models are able to undertake more advanced reasoning tasks. However, we do not within this work introduce any more powerful reasoning models, and the impact of our work is limited to understanding existing models, and potentially guiding future advancements.

\section*{Acknowledgments}

HB acknowledges funding support from the EPSRC Centre for Doctoral Training in Autonomous Intelligent Machines and Systems No. EP/S024050/1. MB is partially supported by the EPSRC Turing AI World-Leading Research Fellowship No. EP/X040062/1 and EPSRC AI Hub No. EP/Y028872/1.

\bibliography{looped_llms}
\bibliographystyle{icml2025}

\newpage
\appendix
\onecolumn

\etocdepthtag.toc{appendix}
\etocsettocstyle{\section*{Appendix Contents}}{} 
\setcounter{tocdepth}{3}
\etocsettagdepth{mtoc}{none}           
\etocsettagdepth{appendix}{subsection}  
\tableofcontents                        

\input{sections/appendix}

\end{document}

%% file: math_commands.tex
\usepackage{amsmath,amsfonts,bm}

\def\eqref#1{equation~\ref{#1}}

\def\1{\bm{1}}

\def\mM{{\bm{M}}}

\def\mW{{\bm{W}}}
\def\mX{{\bm{X}}}
\def\mY{{\bm{Y}}}
\def\mZ{{\bm{Z}}}

\DeclareMathAlphabet{\mathsfit}{\encodingdefault}{\sfdefault}{m}{sl}
\SetMathAlphabet{\mathsfit}{bold}{\encodingdefault}{\sfdefault}{bx}{n}

\newcommand{\softmax}{\mathrm{softmax}}

\DeclareMathOperator{\attn}{\texttt{Attn}}
\DeclareMathOperator{\block}{B}
\DeclareMathOperator{\stack}{S}
\newenvironment{sketch}{%
  \proof}{\endproof}

\newcommand{\ouro}[1]{Ouro}
\newcommand{\raven}[1]{Huginn-0125}
\newcommand{\rllama}[1]{Retrofitted Llama}
\newcommand{\rolmo}[1]{Retrofitted OLMo-2}
\newcommand{\rtllama}[1]{Retrofitted TinyLlama}

%% file: sections/introduction.tex
The vast majority of current LLMs are based on the Transformer architecture \cite{vaswani2017attention}, which comprises a sequence of blocks traversed in a feedforward manner to predict the next token. As the capability of these models increased, attention turned to eliciting \textit{reasoning} capabilities in LLMs by increasing test-time computation, commonly through chain-of-thought (CoT) prompting \cite{wei2022chain} or reinforcement-learning-based fine-tuning, first popularized in the DeepSeek-R1 architecture \cite{guo2025deepseek}. More recently, research has explored building reasoning capabilities directly into the model architecture via \textit{recurrent looping} \cite{geiping2025scaling, wang2025hierarchical, jolicoeur2025less}, where additional test-time compute is spent by taking more recurrent steps, echoing early designs in this direction \cite{graves2016adaptive}. Despite growing empirical success, the mechanisms underlying these models remain poorly understood, as well as their benefits and limitations when compared to feedforward computation.

\begin{figure}[!t]
    \centering
    \includegraphics[width=\linewidth]{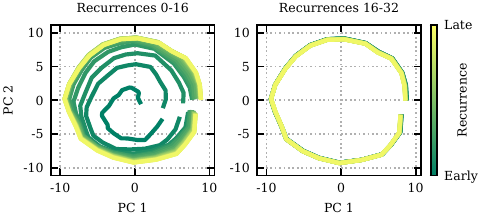}
    \vspace{-0.91cm}
    \caption{\textbf{Latent states after each block in a recurrent model frequently tend towards \emph{separate} fixed points}, meaning that the application of a recurrent block tends towards a consistent trajectory in latent space.}
    \vspace{-0.5cm}
    \label{fig:random_trajectory}
\end{figure}

In this paper, we compare how feedforward and looped LLMs organize computation across (effective) depth through the lens of \textit{stages of inference} \citep{lad2024remarkable, queipo2025attention}, a perspective suggesting that LLM inference can be decomposed into several distinct computational stages. Building on prior observations that repeated application of a shared recurrent block can approach a fixed point or steady state \citep{yang2023looped, geiping2025scaling}, we show that such behavior necessarily implies one of two possibilities: either the contribution of the component Transformer blocks vanishes asymptotically, or their sequential application traces out a \textit{constant cyclic trajectory} in latent space. We further demonstrate empirically that the latter behavior arises in practice when certain architectural conditions are met, and that this appears to be emergent behavior from the Transformer architecture itself, appearing in both trained recurrent models and untrained, randomly initialized models.

\begin{tcolorbox}[colback=orange!10,
leftrule=0.5mm,top=1mm,bottom=1mm,boxrule=0.6pt]
\paragraph{Our analysis yields two key insights:}
\begin{enumerate}
    \item \Cref{sec:cyclic_similarity} establishes that many looped language models tend toward \emph{cyclic fixed-point} behavior, providing theoretical and empirical proof that this implies convergence to fixed attention patterns. \Cref{subsec:norm_and_ii} explores how specific architectural choices influence this behavior.
    \item \Cref{sec:stages_of_inference} demonstrates that these stable attention patterns mirror the ``mixing'' stages of inference learned in feedforward models. We then provide evidence in \Cref{sec:self_organising_soi} that models naturally self-organize into these stages during training, and show in \Cref{sec:stability} that stable fixed-point models successfully maintain these inference stages, whereas unstable models deviate.
\end{enumerate}
\end{tcolorbox}

%% file: sections/preliminaries.tex
\subsection{Looped Transformers}

In this section we introduce Looped Transformers and define the notation that we will use throughout the paper. We represent the input sequence to our transformer of length $T$ and dimension $D$ as $\mX \in \mathbb{R}^{T \times D}$. Following the notation of \citet{yudin2025pay} we define $\attn$, the dot product self-attention mechanism, as a map $f: \mathbb{R}^{T \times D} \to \mathbb{R}^{T \times D}$:
\begin{align}
    \attn(\mX) &= \softmax{\left ( \frac{\mX \mW_Q \mW_K^{\top} \mX^{\top}}{\sqrt{d}} \right )} \mX\mW_V, \\ 
    &= \softmax{\left ( A(\mX) \right )} \mX\mW_V, \label{eq:attn_in_a}
\end{align}
where $\mW_Q, \mW_K, \mW_V \in \mathbb{R}^{D \times d}$ are projection matrices and $A$ is defined for convenience. 

A transformer block typically comprises an attention mechanism and a position-wise MLP as follows:
\begin{align}
    \hat{\mX} &= n_2 \left ( \mX + \attn(n_1(\mX) \right ), \label{eq:block_norm_1} \\
    {\mX}' &= n_4 \left ( \hat{\mX} + \text{MLP}(n_3(\hat{\mX})) \right ), \label{eq:block_norm_2}
\end{align}
where $n_1, n_2, n_3, n_4$ are each optional norms -- here we are borrowing from the notation of \citet{geiping2025scaling}. We denote the action of a Transformer block $\block: \mathbb{R}^{T \times D} \to \mathbb{R}^{T \times D}$ as $\mX' = \block(\mX)$, and refer to the intermediate hidden-state matrices $\mX$ between blocks as the \textbf{residual stream}.


Looped Transformers are Transformers that utilize ``recurrence in depth'' -- that is, they reapply layers to repeatedly act on the latent states. Recent research has identified that an effective way \citep{bae2025mixture} to achieve this is via \emph{cyclic recurrence}: a fixed sequence of layers is repeated in a ``cyclic'' pattern. This is the approach that we focus on in this work, and we introduce it in more detail below. For convenience, we define a $k$-stacked block as a composition of Transformer blocks,
$\stack_k(\mX) = \block_k(\block_{k-1}(\dots \block_1(\mX) \dots))$.


In the case of \citet{geiping2025scaling,mcleish2025teaching} this stacked block may also take an additional input $\mZ \in \mathbb{R}^{T \times D}$ which is typically initialized from a Normal distribution, as well as the original input to the recurrent section: this is known as \emph{input injection} \citep{bai2019deep,anil2022path}, and the two inputs are projected into common feature space $\mathbb{R}^D$ before the block is applied. In the case of input injection, a $k$-stacked block therefore becomes
\begin{equation}
    \stack_k(\mX, \mZ) = \block_k ( \block_{k-1}( \dots \block_1([\mX, \mZ] \mW_I) \dots )),
\end{equation}
where $[\cdot, \cdot]$ denotes concatenation in the channel dimension and $\mW_I \in \mathbb{R}^{2D \times D}$ is a learned projection matrix.

This allows us to define a $(k \otimes l)$-Recurrent block as a $k$-stacked block repeated $l$ times:
\begin{equation}
    R_{l,k}(\mX) = \overbrace{\stack_k ( \stack_{k}( \dots \stack_k(}^{\times l} \mX) \dots )),
\end{equation}

which with input-injection becomes
\begin{equation}
    R_{l,k}(\mX, \mZ) = \overbrace{\stack_k ( \mX, \stack_{k}( \dots \mX, \stack_k( \mX, \mZ)) \dots ))}^{\times l}.
\end{equation}

Note that the input $\mX$ is only ``injected'' at the start of each stack of blocks -- once per recurrence. Additionally, a complete looped Transformer may have multiple feedforward layers before the Recurrent block, and multiple feedforward layers after the Recurrent block: following the convention of \citet{geiping2025scaling}, we refer to these as \emph{prelude} and \emph{coda} layers respectively, and these are simply separate stacked blocks with non-tied layer weights. Where prelude and coda are used, we will frequently refer to this as a \emph{sandwich} block structure.

We combine and adapt the nomenclature of \citet{geiping2025scaling,saunshi2025reasoning} and refer to a looped Transformer with $p$ prelude layers, $k$ recurrent layers and $c$ coda layers with the tuple $(p, k, c)$. Where input injection is used, we add an $I$ subscript $(p, k, c)_I$, and when referring to a recurrent layer looped a specific number of times $l$, we denote this as $(p, k \otimes l, c)$.

In summary, a $(p, k \otimes l, c)$ looped Transformer is defined as
\begin{align*}
    \mX_0 &\gets \stack_p(\mX) \\
    \mX_i &\gets \stack_{k}^{\prime}(\mX_{i-1}) \qquad i \in \{1, \dots, l \} \\
    \mX &\gets \stack_c^{\prime \prime}(\mX_l),
\end{align*}
where $\prime$ and $\prime \prime$ indicates that these are \emph{different} stacks between which weights are not shared. A $(p, k \otimes l, c)_I$ looped Transformer (with input injection) is defined as
\begin{align*}
    \mX &\gets \stack_p(\mX) \\
    \mZ_i &\gets \stack_{k}^{\prime}(\mX, \mZ_{i-1}) \qquad i \in \{1, \dots, l \} \\
    \mX &\gets \stack_c^{\prime \prime}(\mZ_l),
\end{align*}
where $\mZ_0$ is initialized such that each column is sampled from $\mathcal{N}(\boldsymbol{0}, \sigma^2 \mathbb{I}_{D})$.


We will describe our results via \textbf{grouping}:
\begin{itemize}
    \item \textbf{No grouping}: The value is visualized as it evolves through sequential layers of the model, irrespective of whether these layers are repeated.
    \item \textbf{By recurrence}: Separate lines are visualized for each complete pass through the recurrent block. The $x$-axis is typically percentage scaled to represent relative depth within that block (including prelude/coda), allowing us to overlay and compare successive passes. Always presented with a \textbf{\textcolor{mplsummergreen}{green}-\textcolor{mplsummeryellow}{yellow}} colorbar, with later recurrences colored more yellow.
    \item \textbf{By layer}: Separate lines are visualized for each unique layer, showing how the value evolves across recurrences. Always presented with a \textbf{\textcolor{mplwinterblue}{blue}-\textcolor{mplwintergreen}{green}} colorbar, with later layers colored more green.
\end{itemize}

\subsection{Stages of Inference}\label{subsec:stages_of_inference_prelim}

The behavior of layers in feedforward Transformers appears to change sharply with depth: \citet{lad2024remarkable} originate the term ``stages of inference'' and demonstrate how several different layer mechanisms emerge at different depths. \citet{queipo2025attention} further develop this viewpoint, focusing on behaviors that can be characterized by the \emph{mixing} (or lack thereof) induced by the attention heads. We focus on this latter perspective.
Mixing in this context refers to the extent to which the attention mechanism incorporates information from previous tokens at each layer. Throughout the main text of this paper we quantify our study of mixing behavior through the \emph{ColSum Concentration} metric introduced in \citet{queipo2025attention}; we introduce and discuss additional metrics in \Cref{appen:additional_soi_results}.

We first define the column sum $c_j = \sum_i A_{ij}$ to capture how much attention mass is received by token $j$. We normalize this as $\hat{c}_j = c_j / T$ to obtain a probability distribution, noting that $\sum_{i,j} A_{ij} = T$ since $A$ is row-stochastic. From this distribution, we define the \textbf{ColSum Concentration} via its normalized entropy as
$C = 1 - H_{\text{col}} \in [0,1] = 1 + \frac{1}{\log T} \sum_j c_j \log c_j$.

Large values of $C$ indicate a high \emph{concentration} of attention mass: few columns receive most of the mass. We note therefore that this metric captures the well-studied \emph{attention sink} \citep{xiao2023efficient,barbero2025llms} behavior, but generalizes to capture concentration over any token position. This property is useful for our investigation as not all models studied herein exhibit sinks on the first token; in particular OLMo-2 frequently concentrates attention mass on punctuation, echoing a result in \citet{sandoval2025using}.

%% file: sections/related_work.tex
\paragraph{Looped and Recurrent Transformers}

Reusing the same Transformer block for multiple iterations is an idea that has been explored in the literature. This began with the introduction of Universal Transformers  \cite{dehghani2018universal}, which have also resulted in sparsified and conditional-computation extensions \cite{tan2023sparse,csordas2024moeut}. Other more recent recurrent style architectures with a higher focus on reasoning-style tasks have been HRM \cite{wang2025hierarchical} and TRM \cite{jolicoeur2025less}. Within language modeling, we highlight \raven{} \cite{geiping2025scaling}, \ouro{} \cite{zhu2025scaling}, and Mixture-of-Recursions \citep{bae2025mixture} as models that have been pretrained from random initialization, as well as recent work by \citep{mcleish2025teaching, koishekenov2025encode} that retrofit recurrence into pretrained LLMs.  

In terms of mechanistic studies, we highlight \citet{pappone2025two}, who analyze “two-scale” latent dynamics in recurrent Transformers. However, the setting of their analysis is different from ours: they analyse a model in which each recurrent block comprises either 1 or 2 layers, and the model comprises multiple \emph{separate} recurrent blocks. In this way, the two scales they refer to correspond to the outputs of each iteration of a given recurrent block, and outputs of recurrent blocks when transitioning between different recurrent blocks. We instead study a single looped block with deeper cycles (4+ layers), closer to common looped architectures, and we analyze the internals of these cyclic blocks by examining the latent states of each separate layer. We also highlight work related to looped model expressivity \citep{xu2024expressive, saunshi2025reasoning}, as well as work on neural network stability and fixed-point dynamics \citep{bai2019deep, anil2022path, ke2024advancing, yudin2025pay}. The only work -- of which we are aware -- that analyses the internal states of the cyclic recurrent blocks is \citet{lu2025latent}, who demonstrate cyclic behavior in logit lens prediction throughout recurrent blocks.

\paragraph{Stages of Inference and Attention Dynamics}

The idea that LLMs organize their feedforward computation into several distinct \textit{stages of inference} was first proposed by \citet{lad2024remarkable}. Building on this, \citet{queipo2025attention} explain the emergence of these stages through the behavior of attention heads, driven by massive activations \citep{sun2024massive}. We also highlight a complementary line of work that analyzes LLM learning dynamics via attention patterns through the lens of \textit{mixing} \citep{barbero2024transformers, arroyo2026a, velivckovic2024softmax, barbero2025llms}, motivated by information propagation challenges originally studied in Graph Neural Networks (GNNs) \citep{cai2020note,alon2020bottleneck,arroyo2025vanishing,hariri2025return,blayney2025glstm}.

\paragraph{Test-time Computation}

Test-time computation broadly refers to giving a model the ability to expend additional computational cycles at inference in proportion to the difficulty of the input, rather than committing to a fixed compute budget for all examples. In this paper, we focus specifically on \emph{recurrence} as a mechanism for scaling computation at test time, as opposed to alternative strategies such as early-exit architectures \cite{schuster2022confident} or continuous thought machines \cite{darlow2025continuous}. Classic approaches to adaptive test-time compute include Adaptive Computation Time \cite{graves2016adaptive} and subsequent probabilistic halting frameworks such as PonderNet \cite{banino2021pondernet}. Building on these foundations, recent work has begun to characterize when additional inference compute actually generalizes beyond training-time budgets \cite{schwarzschild2021can}, how to mitigate failures due to excessive computation (``overthinking'') \citep{bansal2022end}, how to ensure stable dynamics with repeated iteration \citep{bear2024rethinking}, and how looped Transformers can better generalize to out of distribution tasks at test time \citep{mcleish2024transformers}.

%% file: sections/appendix.tex


\newpage



\section{Proofs of Propositions}
\input{sections/appendices/proofs.tex}

\section{Additional Experimental Details}\label{appen:experiment_details}

\input{sections/appendices/additional_experiment_details.tex}

\FloatBarrier
\section{Non-Fixed-Point Limiting Behavior}\label{appen:orbits_and_sliders}
\input{sections/appendices/orbits_sliders.tex}

\FloatBarrier
\section{Additional Fixed Point Results}\label{appen:additional_fixed_point_results}
\input{sections/appendices/additional_fp_results.tex}

\FloatBarrier
\section{Additional Stages of Inference Results}\label{appen:additional_soi_results}

\input{sections/appendices/additional_soi_results.tex}

\FloatBarrier
\section{Looped Floorplan}

\input{sections/appendices/looped_floorplan.tex}

%% file: sections/appendices/proofs.tex
\paragraph{Proof of \Cref{prop:cyclic_fixed_point}}

\begin{sketch}
    Note
    $$\stack_k(\mX') = \block_k(\block_{k-1}( \dots \block_1(\mX') \dots )) = \mX'$$
    Applying $\block_1$ to both sides yields to
    $\block_1(\block_k(\block_{k-1}( \dots \block_1(\\\mX') \dots ))) = \block_1(\mX').$
    Defining $\mY' = \block_1(\mX')$, we obtain
    $\block_1(\block_k(\block_{k-1}( \dots \block_2(\mY') \dots ))) = \mY'$. 
    The general proof follows by induction, and extends trivially to input injection.   
\end{sketch}

\begin{proof}
    Assume that $(l,k)$-Recurrent block $S_k$ reaches a fixed point such that $S_k(\mX') = \mX'$. Note that
    $$\stack_k(\mX') = \block_{k-1}(\block_{k-2}( \dots \block_0(\mX') \dots )) = \mX'$$
    define the cyclic shift function $f_n(i) = (i + n) \mod k$. Now we aim to prove by induction
    $$\forall n \in \mathbb{Z}_+ \cup \{0\} : \block_{f_n(k-1)}(\block_{f_n(k-2)}( \dots \block_{f_n(0)}(\mZ') \dots )) = \mZ'$$
    for some $\mZ'$. The base case $n=0$ is trivial as $\forall i < k: f_0(i) = i$.
    Now assume for some $n=j$
    \begin{equation}\label{eq:ind_hyp}
        \block_{f_j(k-1)}(\block_{f_j(k-2)}( \dots \block_{f_j(0)}(\mZ') \dots )) = \mZ'
    \end{equation}
    Now, let $n=j+1$. Note 
    $$f_{j+1}(i) = (i + j + 1) \mod k = f_{j}(i + 1)$$
    Therefore
    \begin{align}
        \block_{f_{j+1}(k-1)}(\block_{f_{j+1}(k-2)}( \dots \block_{f_{j+1}(0)}(\mY) \dots )) &= \block_{f_j(k)}(\block_{f_j(k-1)}( \dots \block_{f_j(1)}(\mY) \dots )) \\
        &= \block_{f_j(0)}(\block_{f_j(k-1)}( \dots \block_{f_j(1)}(\mY) \dots ))  \label{eq:cycle_applied}
    \end{align}
     Now take \Cref{eq:ind_hyp} and apply $\block_{f_j(0)}$ to both sides, defining a new fixed point $\mZ^{\prime \prime} = \block_{f_j(0)}(\mZ')$:
     \begin{align}
         \block_{f_j(0)}(\block_{f_j(k-1)}(\block_{f_j(k-2)}( \dots \block_{f_j(0)}(\mZ') \dots ))) &= \block_{f_j(0)}(\mZ') \\
         \block_{f_j(0)}(\block_{f_j(k-1)}(\block_{f_j(k-2)}( \dots \block_{f_j(1)}(\mZ^{\prime \prime}) \dots ))) &= \mZ^{\prime \prime} \label{eq:new_fixed_point}
     \end{align}
     Combining \Cref{eq:new_fixed_point} and \Cref{eq:cycle_applied}, we see that there exists a fixed point $\mZ^{\prime \prime}$ such that
     $$\block_{f_{j+1}(k-1)}(\block_{f_{j+1}(k-2)}( \dots \block_{f_{j+1}(0)}(\mZ^{\prime \prime}) \dots )) = \mZ^{\prime \prime}$$
     Therefore completing the induction step and proving the proposition.
\end{proof}

\paragraph{Proof of \Cref{prop:similar_attention_patterns}}
\begin{proof}
Define
$$\mathcal S_\ell(\mX):=\softmax(A_\ell(\mX)) = \softmax{\left ( \frac{\mX \mW_Q \mW_K^{\top} \mX^{\top}}{\sqrt{d}} \right )}$$
Let $L_{\mathrm{sm}}$ be the Lipschitz constant of the row-wise softmax such that
    \begin{equation}\label{eq:softmax_lip}
        \big\|\mathcal S_\ell(\mX_{\ell,t})-\mathcal S_\ell(\mX_{\ell,t-1})\big\| \leq L_{\mathrm{sm}} \big\| A_\ell(\mX_{\ell,t})- A_\ell(\mX_{\ell,t-1})\big\|
    \end{equation}
    Recently \citet{nair2025softmax} shows that $L_{\mathrm{sm}} = 1/2$.

    Define for convenience $\mM = \mW_Q \mW_K^{\top}$ and $\Delta_{\ell, t} = \mX_{\ell,t}- \mX_{\ell,t-1}$. Then
    \begin{align*}
        A_\ell(\mX_{\ell,t})- A_\ell(\mX_{\ell,t-1}) &= \frac{\mX_{\ell,t} \mW_Q \mW_K^{\top} \mX^{\top}_{\ell,t}}{\sqrt{d}} - \frac{\mX_{\ell,t-1} \mW_Q \mW_K^{\top} \mX^{\top}_{\ell,t-1}}{\sqrt{d}} \\
        &= \frac{1}{\sqrt{d}} \left ( \left ( \Delta_{\ell, t} + \mX_{\ell, t-1} \right ) \mM \left ( \Delta_{\ell, t} + \mX_{\ell, t-1} \right )^{\top} - \mX_{\ell,t-1} \mM \mX^{\top}_{\ell,t-1} \right ) \\
        &= \frac{1}{\sqrt{d}} \left (  {\Delta_{\ell, t} \mM \Delta_{\ell, t}^{\top}} + \Delta_{\ell, t} \mM \mX_{\ell, t-1}^{\top} +  \mX_{\ell, t-1} \mM \Delta_{\ell, t}^{\top} + \cancel{\mX_{\ell, t-1} \mM \mX_{\ell, t-1}^{\top}} - \cancel{\mX_{\ell,t-1} \mM \mX^{\top}_{\ell,t-1}} \right )  \\
        &= \frac{1}{\sqrt{d}} \left (  \Delta_{\ell, t} \mM \left ( \Delta_{\ell, t} + \mX_{\ell, t-1} \right )^{\top} +  \mX_{\ell, t-1} \mM \Delta_{\ell, t}^{\top} \right ) \\
        &= \frac{1}{\sqrt{d}} \left (  \Delta_{\ell, t} \mM \mX_{\ell, t}^{\top} +  \mX_{\ell, t-1} \mM \Delta_{\ell, t}^{\top} \right ) \\
    \end{align*}
    Therefore,
    \begin{align*}
        \big\|A_\ell(\mX_{\ell,t})- A_\ell(\mX_{\ell,t-1})\big\| &= \big\| \frac{1}{\sqrt{d}} \left (  \Delta_{\ell, t} \mM \mX_{\ell, t}^{\top} +  \mX_{\ell, t-1} \mM \Delta_{\ell, t}^{\top} \right ) \big\| \\
        &\leq \frac{1}{\sqrt{d}} \left ( \big\| \Delta_{\ell, t} \mM \mX_{\ell, t}^{\top} \big\| + \big\| \mX_{\ell, t-1} \mM \Delta_{\ell, t}^{\top} \big\| \right ) \\
        &\leq \frac{\big\| \mM \big\| \left ( \big\| \mX_{\ell, t} \big\| + \big\| \mX_{\ell, t-1} \big\| \right )}{\sqrt{d}} \big\| \Delta_{\ell, t} \big\|
    \end{align*}
    Assuming $\|\mX_{\ell,t}\|\le B$ for all $t$, and defining $\kappa_\ell = \|\mW_{Q,\ell}\mW_{K,\ell}^\top\| = \big\| \mM \big\|$, we see
    \begin{equation}\label{eq:a_inequality}
        \big\|A_\ell(\mX_{\ell,t})- A_\ell(\mX_{\ell,t-1})\big\| \leq \frac{2 B \kappa_\ell}{\sqrt{d}} \big\|\mX_{\ell,t}-\mX_{\ell,t-1}\big\|
    \end{equation}
    Combining \Cref{eq:softmax_lip} and \Cref{eq:a_inequality}, we complete the proof:
    \begin{equation}
        \big\|\mathcal S_\ell(\mX_{\ell,t})-\mathcal S_\ell(\mX_{\ell,t-1})\big\| \leq L_{\mathrm{sm}} \frac{2 B \kappa_\ell}{\sqrt{d}} \big\|\mX_{\ell,t}-\mX_{\ell,t-1}\big\|
    \end{equation}
\end{proof}

%% file: sections/appendices/additional_experiment_details.tex
Unless stated otherwise, all of our experiments are averaged over the same subset of 256 random examples from the test split of the GSM8k \citep{cobbe2021gsm8k} dataset. A few illustrative plots (for example, latent space trajectories) are instead produced with a \emph{test sequence} that we obtain from \citet{barbero2025llms}: ``Hello! I've been well. I hope that you're doing well.'' Additional results targetting non-reasoning behavior using the HellaSwag dataset (following an identical setup of running inference on the same 256 random examples from the test split) can be found in \Cref{appen:hellaswag_soi}.

All pretrained models are obtained from Huggingface, model references provided in \Cref{tab:model_ids_and_bases}. We use standard settings for the tokenizers of each model, and as such some models prepend a BOS token whereas others do not: we make this clear in the `Prepends BOS' column of the same table.

\begin{table*}[ht]
    \centering
    \caption{{Looped Transformer architecture summary.} Retrofitted Llama and OLMo use 6 layers in their recurrent block, TinyLlama uses 8.}
    \label{tab:looped_model_details}
    \begin{tabularx}{\textwidth}{Xcccc}
        \toprule
        \textbf{Model Name} & \textbf{Structure} & \textbf{Train Loops} & \textbf{Block Norm} & \textbf{Norm After Loop?} \\
        \midrule
        Ouro 1.4B \citep{zhu2025scaling} & $(0,24,0)$ & 4 & $\begin{aligned}
            \hat{\mX} &=  \mX + n \left (\attn(n(\mX) \right ) \\
    {\mX}' &=  \hat{\mX} + n \left (\text{MLP}(n(\hat{\mX})) \right ) \\
        \end{aligned}$ & Yes \\ \midrule
        \raven{} \citep{geiping2025scaling} & $(2,4,2)_I$ & 32 & $\begin{aligned}
            \hat{\mX} &= n \left ( \mX + \attn(n(\mX) \right ) \\
    {\mX}' &= n \left ( \hat{\mX} + \text{MLP}(n(\hat{\mX})) \right )
        \end{aligned}$ & Yes \\ \midrule
        Retrofitted \citep{mcleish2025teaching} & $(4,6,4)_I$ / $(4,8,4)_I$ & 32 & $\begin{aligned}
            \hat{\mX} &= \mX + \attn(n(\mX) \\
    {\mX}' &= \hat{\mX} + \text{MLP}(n(\hat{\mX})) 
        \end{aligned}$ & No \\
        \bottomrule
    \end{tabularx}
\end{table*}

\begin{table}[h!]
\centering
\begin{tabularx}{\textwidth}{|c|c|X|X|c|X|}
\hline
\textbf{Model} & \textbf{Num Params} & \textbf{Huggingface ID} & \textbf{Base Model} & \textbf{Prepends BOS} & \textbf{Notes} \\ \hline
\ouro{} 1.4B & 1.4B & \path{ByteDance/Ouro-1.4B} & - & $\cross$ & - \\ \hline
\ouro{} 2.6B & 2.6B & \path{ByteDance/Ouro-2.6B} & \path{ByteDance/Ouro-1.4B} & $\cross$ & ``Upcycled'' from \ouro{} 1.4B by repeating the same layers after the first training phase. \\ \hline 
\raven{} & 3.5B & \path{tomg-group-umd/huginn-0125} & - & $\checkmark$ & - \\ \hline
Retrofitted Llama & 1B & \path{smcleish/Recurrent-Llama-3.2-train-recurrence-32} & \path{meta-llama/Llama-3.2-1B} & $\checkmark$ & Models trained with fewer recurrences exhibit similar mixing patterns. \\ \hline
Retrofitted OLMo-2 & 1B & \path{smcleish/Recurrent-OLMo-2-0425-train-recurrence-32} & \path{allenai/OLMo-2-0425-1B} & $\cross$ & As above. \\  \hline
Retrofitted TinyLlama & 0.8B & \path{smcleish/Recurrent-TinyLlama-3T-train-recurrence-32} & \path{TinyLlama/TinyLlama-1.1B-intermediate-step-1431k-3T} & $\checkmark$ & As above. \\  \hline
\end{tabularx}
\caption{Additional Huggingface details on pretrained Looped models used.}
\label{tab:model_ids_and_bases}
\end{table}

Our small training runs in \Cref{sec:self_organising_soi} are performed by adapting a publicly available fork of Nanochat \citep{nanochat}, \url{https://github.com/TrelisResearch/nanochat/tree/recursive}. For all experiments we use a model dimension (residual stream) of 512 (4 heads of dimension 128) and train for 3.7B tokens. As discussed in the main text, loss is the same as that of a regular feedforward model: cross entropy loss on the final output representation (as opposed to the summed loss of \citet{zhu2025scaling}). Each model is trained for a \emph{constant} 4 recurrences (as opposed to the Poisson sampling of \citet{geiping2025scaling}). All models use pre-norm only: this norm does not result in the stability issues reported by \citet{geiping2025scaling,zhu2025scaling}, but this is likely because we are operating at a far smaller scale.

%% file: sections/appendices/orbits_sliders.tex
\subsection{How Frequent is Non-Fixed-Point Behavior?}

In this section we investigate more closely the ``orbits'' and ``sliders'' initially observed by \citet{geiping2025scaling}. These are important as they appear to represent stable limiting behavior that are \textbf{not fixed points}.

We develop a heuristic algorithm to detect these behaviors, presented in \Cref{alg:orbit_slider}. We use the sequence of cosine similarities from the final recurrent layer (per token), where the output residual stream after each of 128 recurrences is compared to the final residual stream (as in \Cref{fig:fixed_point_cosine_similarities}). This is visualized in the leftmost column of \Cref{fig:orbit_algorithm_examples}. We set threshold $\tau=0.05$ and fixed-point fraction $\rho=0.9$.

Using this algorithm, we classify the limiting behavior over \emph{all} tokens in the GSM8k test set for the \raven{} and \rllama{} models. We discover that the \emph{system prompt} used before presenting the GSM8k question has a large impact on the limiting behavior types and as such we test the following prompts across both models:
\paragraph{Long Persona} This is the system prompt used by \citet{geiping2025scaling} to produce their Orbit plots:
\begin{tcolorbox}[colback=gray!10, colframe=gray!50, arc=5pt]
You are Huginn, an AI assistant who embodies careful thought and deliberation. Your responses demonstrate:

Methodical reasoning, breaking complex problems into clear steps

Mathematical and programming expertise grounded in fundamentals

The ability to acknowledge uncertainty and correct course when needed

Clear communication that illuminates rather than just informs

When engaging with questions, you first seek to understand their deeper structure before answering. Like your namesake who flew the nine worlds seeking wisdom, you explore problems from multiple angles, helping users build genuine understanding rather than providing shallow answers.
You express warmth and intellectual curiosity while maintaining professionalism. When faced with errors or confusion, you model honest reflection and careful correction. Your goal is not just to provide answers, but to help humans develop clearer, deeper thinking.
\end{tcolorbox}

\paragraph{Long Persona (Padded)} This is the same prompt as above but with all tokens replaced with the padding token: we include this to test whether the behavior arises purely due to the length of the input.

\paragraph{Short Math} This is the system prompt used by \citet{geiping2025scaling} in their GSM8k benchmark evaluation:
\begin{tcolorbox}[colback=gray!10, colframe=gray!50, arc=5pt]
You are a helpful assistant that is capable of helping users with mathematical reasoning.
\end{tcolorbox}

We present the percentage of GSM8k tokens that exhibit each classification of limiting behavior in \Cref{tab:behavior_tokens} and the percentage of GSM8k examples that exhibit each classification of limiting behavior \emph{across any of their tokens} in \Cref{tab:behavior_existence}.

These results reveal that these non-fixed-point limiting behaviors appear to be extremely rare in practice: without a system prompt (the setting used throughout this paper) only approximately 0.02\% of tokens exhibit non-fixed-point behavior. This percentage can be significantly increased with the longer system prompt, but these behaviors remain rare at 0.14\%. Curiously, the ``Persona'' system prompt used by \citet{geiping2025scaling} seems to increase the occurrence of these orbits and sliders more than a comparable prompt of padding tokens, suggesting that the effect is not purely to do with sequence length.

We highlight that these exact results need to be treated with caution: the absolute values vary significantly depending on the algorithm hyperparameters chosen, and in the absence of a good external metric we selected these hyperparameters manually via visual inspection of the trajectories and cosine similarities to ensure the results agreed with intuition. However, across different hyperparameters the results consistently showed that non-fixed-point behavior is rare and that the long persona prompt results in a greater rate of non-fixed-point behavior. We leave in-depth classification and explanation of this phenomena to future work.

\begin{algorithm}
\caption{Per-token limiting behaviour classification}\label{alg:orbit_slider}
\begin{algorithmic}[1]
\REQUIRE similarity (or norm) series $\mathbf{s} \in \mathbb{R}^{n}$ for one token
         over the last $n$ recurrent firings (final firing excluded);
         threshold $\tau$; fixed-point fraction $\rho$
\ENSURE  Label $\ell \in \{\textsc{FixedPoint},\, \textsc{Orbit},\, \textsc{Slider},\, \textsc{Unknown}\}$

\STATE \textit{--- Detrend ---}
\STATE Fit linear trend: $[\hat{a},\hat{b}] \gets \arg\min_{a,b}\sum_i (s_i - a i - b)^2$
\STATE $\tilde{\mathbf{s}} \gets \mathbf{s} - (\hat{a}\,\mathbf{t} + \hat{b})$
       \hfill $\triangleright$ $\mathbf{t} = [0, 1, \dots, n-1]^\top$

\STATE \textit{--- Spectral amplitude ---}
\STATE $\mathbf{w} \gets \tilde{\mathbf{s}} \odot \operatorname{Hann}(n)$
       \hfill $\triangleright$ reduce spectral leakage
\STATE $\mathbf{M} \gets |\operatorname{RFFT}(\mathbf{w})|_{1:}$
       \hfill $\triangleright$ discard DC bin
\STATE $k^* \gets \arg\max_k M_k$
\STATE $A \gets 4\, M_{k^*} / n$
       \hfill $\triangleright$ Hann-corrected amplitude

\STATE \textit{--- Classify (first match wins) ---}
\STATE $n_{\text{close}} \gets \#\{i : s_i \geq 1 - \tau\}$
       \hfill $\triangleright$ (or $s_i \leq \tau$ for norm)
\IF{$n_{\text{close}} \geq \rho\, n$}
    \STATE \textbf{return} $\textsc{FixedPoint}$
\ENDIF
\STATE \hfill $\triangleright$ peak-to-peak $= 2A \geq \tau$; at least 2 full cycles in window
\IF{$A \geq \tau/2$ \textbf{ and } $(k^*+1)/n \geq 2/n$}
    \STATE \textbf{return} $\textsc{Orbit}\!\left(\text{freq} = (k^*+1)/n,\; \text{amp} = A\right)$
\ENDIF
\STATE $g \gets \hat{a}$
       \hfill $\triangleright$ linear-fit slope; negate for norm series
\STATE \hfill $\triangleright$ sim increases by ${\geq}\,\tau$ over the full window
\IF{$g > \tau / n$}
    \STATE \textbf{return} $\textsc{Slider}(g)$
\ENDIF
\STATE \textbf{return} $\textsc{Unknown}$
\end{algorithmic}
\end{algorithm}

\begin{figure}
    \centering
    \includegraphics[width=\linewidth]{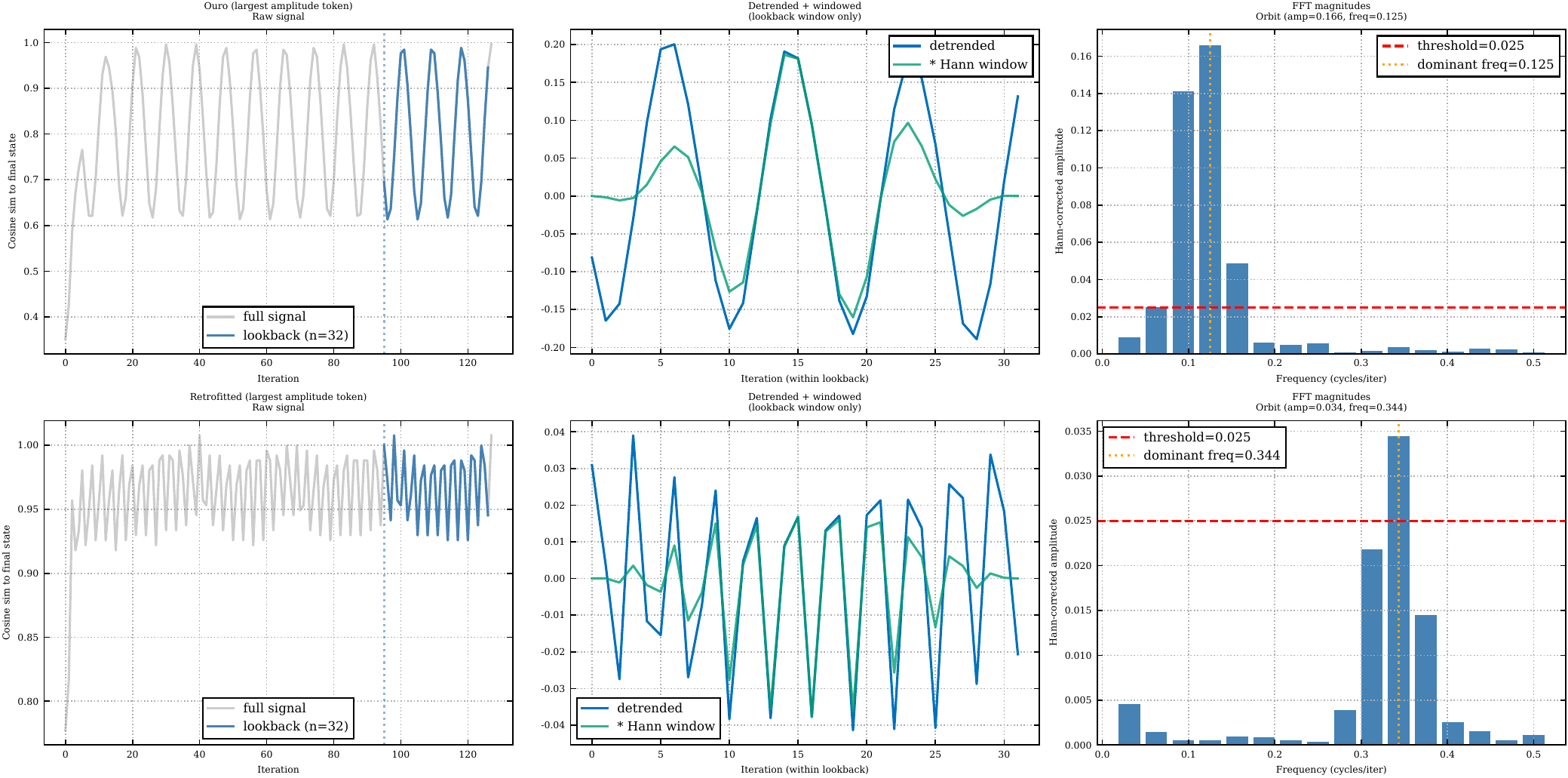}
    \caption{Visualizing the component parts of the Orbit detection algorithm of \Cref{alg:orbit_slider}. The input sequence (cosine similarities for the residual streams of a given token and layer and successive recursions, as compared to their final residual stream) is visualized in the leftmost column. The center column visualizes the effect of windowing and de-trending, and the rightmost column shows the FFT magnitudes. The top row visualizes the detected Orbit with the largest amplitude for the \raven{} model (which occurs with the ``Long Persona'' prompt) and the bottom row visualizes the largest amplitude for the \rllama{} model (which occurs with no system prompt). We note the appearance of complex, multi-frequency oscillation in this latter case.}
    \label{fig:orbit_algorithm_examples}
\end{figure}

\begin{table}
\centering
\caption{Percentage of tokens exhibiting each behavior type, by model and system prompt.}
\label{tab:behavior_tokens}
\begin{tabular}{llrrrr}
\toprule
Model & System Prompt & Non-Fixed-Point \% & Orbit \% & Slider \% & Unknown \% \\
\midrule
Huginn & Long Persona & 0.14 & 0.13 & 0.00 & 0.01 \\
Huginn & Long Persona (Padded) & 0.05 & 0.02 & 0.02 & 0.01 \\
Huginn & No System Prompt & 0.02 & 0.01 & 0.01 & 0.00 \\
Huginn & Short Math & 0.01 & 0.00 & 0.00 & 0.00 \\
Retrofitted-Llama & Long Persona & 0.00 & 0.00 & 0.00 & 0.00 \\
Retrofitted-Llama & Long Persona (Padded) & 0.00 & 0.00 & 0.00 & 0.00 \\
Retrofitted-Llama & No System Prompt & 0.00 & 0.00 & 0.00 & 0.00 \\
Retrofitted-Llama & Short Math & 0.00 & 0.00 & 0.00 & 0.00 \\
\bottomrule
\end{tabular}
\end{table}

\begin{table}
\centering
\caption{Percentage of examples exhibiting each behavior type at least once on any question token, by model and system prompt.}
\label{tab:behavior_existence}
\begin{tabular}{llrrrr}
\toprule
Model & System Prompt & Non-Fixed-Point \% & Orbit \% & Slider \% & Unknown \% \\
\midrule
Huginn & Long Persona & 2.81 & 2.50 & 0.15 & 1.06 \\
Huginn & Long Persona (Padded) & 0.83 & 0.23 & 0.61 & 0.15 \\
Huginn & No System Prompt & 0.76 & 0.53 & 0.15 & 0.08 \\
Huginn & Short Math & 0.45 & 0.38 & 0.08 & 0.00 \\
Retrofitted-Llama & Long Persona & 0.00 & 0.00 & 0.00 & 0.00 \\
Retrofitted-Llama & Long Persona (Padded) & 0.00 & 0.00 & 0.00 & 0.00 \\
Retrofitted-Llama & No System Prompt & 0.08 & 0.08 & 0.00 & 0.00 \\
Retrofitted-Llama & Short Math & 0.00 & 0.00 & 0.00 & 0.00 \\
\bottomrule
\end{tabular}
\end{table}

\FloatBarrier
\subsection{Do Intermediate Layers Exhibit Non-Fixed-Point Behavior?}

\citet{geiping2025scaling} observe orbits and sliders in the latent states after each application of the entire recurrent block. Here we investigate what occurs in the latent states of the intermediate layers within the recurrent block.

We first extend Figure 16 of \citet{geiping2025scaling} (visualizing latent trajectories on a math prompt), visualizing also the trajectories of the intermediate layer residual streams: this can be found in \Cref{fig:intermediate_nfp_behavior}. This demonstrates that in this particular case, where Orbits occur, they also occur in the intermediate layers. This implies -- viewed in realized depth -- latent trajectories exhibiting multi-scale cyclic behavior.

We investigate this across all GSM8k test examples in \Cref{fig:nfp_behavior_conditional_probability} by plotting the conditional probability of observing each behavior on a given token for any layer, given an observation of observing another behavior on that same token. We discover that orbits and sliders do not co-occur across looped layers, but that orbits and sliders both frequently co-occur with fixed point behavior. ``Unknown'' behavior often co-occurs with orbits, which we suggest may be due to mis-classification of the behavior algorithm.

\begin{figure}[ht]
    \centering
    \includegraphics[width=0.6\linewidth]{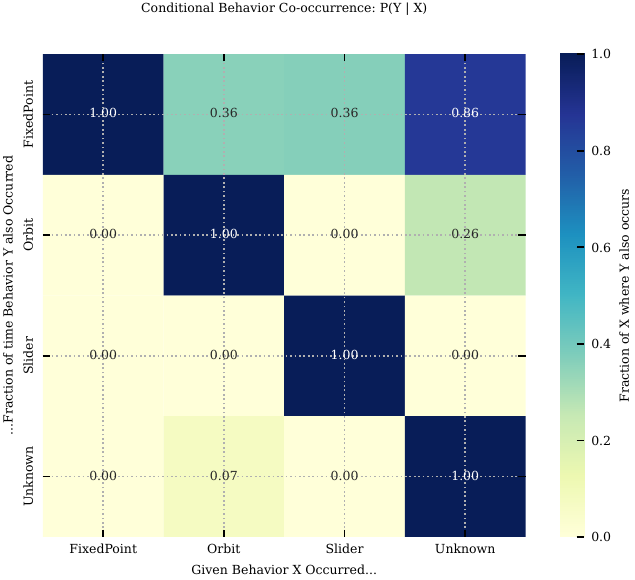}
    \caption{Conditional probabilities of co-occurrence for the different limiting behaviors.}
    \label{fig:nfp_behavior_conditional_probability}
\end{figure}

\begin{figure}
    \centering
    \includegraphics[width=\linewidth]{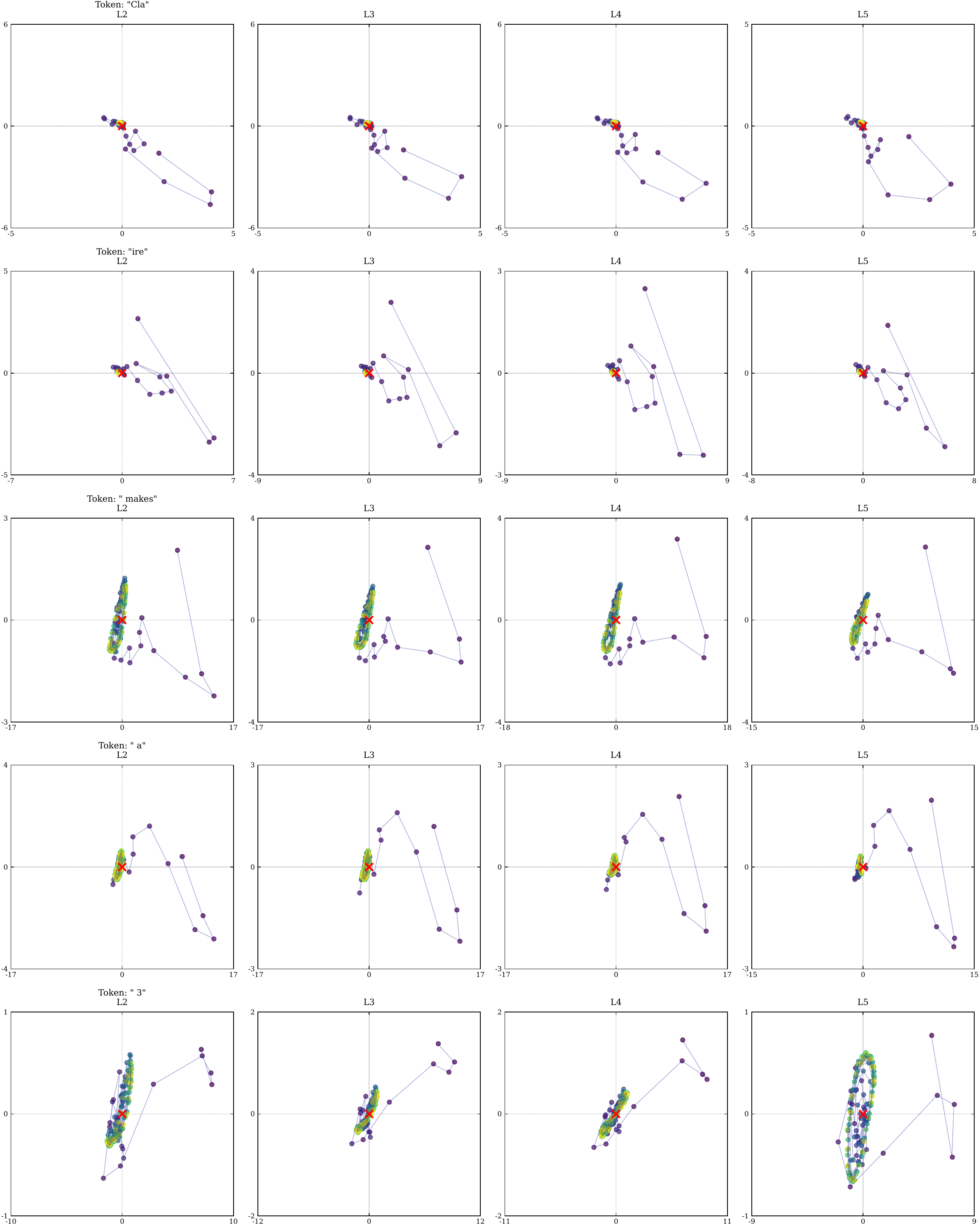}
    \caption{PCA trajectories in the intermediate layers of \raven{}: this reproduces the leftmost column of Fig. 16 in \citet{geiping2025scaling} (the first two principal components) and additionally plots the latent trajectories for the intermediate layers in the recurrent block.}
    \label{fig:intermediate_nfp_behavior}
\end{figure}

\FloatBarrier
\subsection{How Does Non-Fixed-Point Behavior Impact Stages of Inference?}

To complete the link between non-fixed-point behavior and our work we additionally investigate how this behavior impacts the observed stages of inference.

To attempt to isolate the ``worst case'' scenario for stages of inference stability, we plot stages of inference for the GSM8k test prompt that exhibits the greatest orbit amplitude. We first plot in realized depth the extended stages of inference metrics used throughout the paper, see \Cref{fig:orbits_sliders_huginn_stability_in_recurrence}. This demonstrates that sink rates show some variability with the orbiting behavior, but the other metrics remain broadly consistent. We plot also the same metrics in block depth (showing only the final 64 loops) in \Cref{fig:orbits_sliders_huginn_stability_in_depth}: this demonstrates clearly that the stages of inference remain very consistent despite the orbiting behavior.

\begin{figure}[ht]
    \centering
    \includegraphics[width=\linewidth]{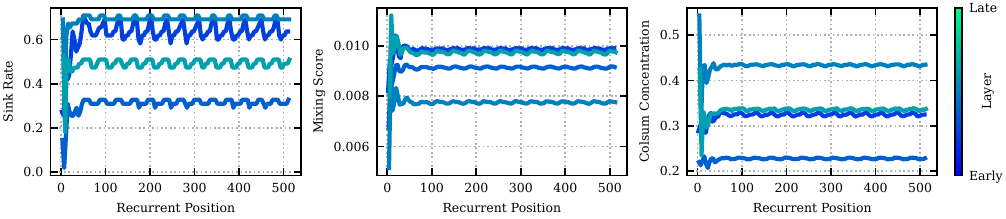}
    \caption{Stages of inference metrics (sink rate, mixing score and colsum concentration) for \raven{} across 128 recurrences, for the GSM8k test prompt that exhibited the largest orbit amplitude. Visualized in realized depth.}
    \label{fig:orbits_sliders_huginn_stability_in_recurrence}
\end{figure}

\begin{figure}[ht]
    \centering
    \includegraphics[width=\linewidth]{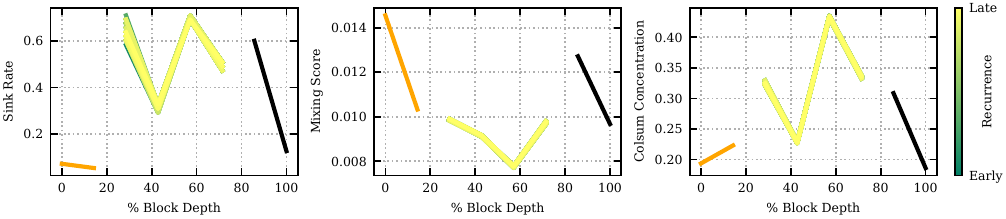}
    \caption{Stages of inference metrics (sink rate, mixing score and colsum concentration) for \raven{} across 128 recurrences, for the GSM8k test prompt that exhibited the largest orbit amplitude: only the final 64 recurrences are visualized, to isolate the impact of the orbit. Visualized in percentage block depth.}
    \label{fig:orbits_sliders_huginn_stability_in_depth}
\end{figure}

We plot the same for the largest amplitude orbit on the \rllama{} model in \Cref{fig:orbits_sliders_llama_stability_in_depth}, demonstrating that the same behavior holds.

\begin{figure}[ht]
    \centering
    \includegraphics[width=\linewidth]{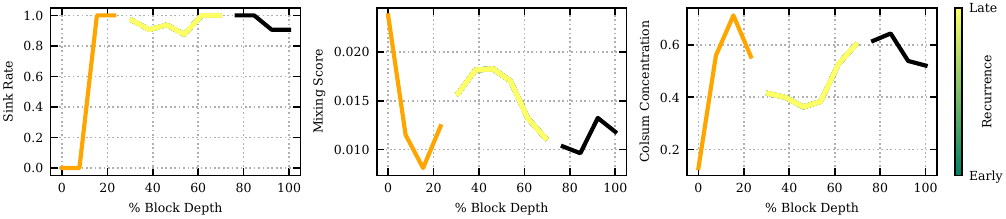}
    \caption{Stages of inference metrics (sink rate, mixing score and colsum concentration) for \rllama{} across 128 recurrences, for the GSM8k test prompt that exhibited the largest orbit amplitude: only the final 64 recurrences are visualized, to isolate the impact of the orbit. Visualized in percentage block depth.}
    \label{fig:orbits_sliders_llama_stability_in_depth}
\end{figure}

%% file: sections/appendices/additional_fp_results.tex
\FloatBarrier
\subsection{Cyclic Similarity}\label{appen:additional_cyclic_similarity}

We include here additional plots to validate our cyclic similarity claims in \Cref{sec:cyclic_similarity}.

\Cref{fig:similar_residual_stream_cosine} complements \Cref{fig:similar_attention_patterns} by visualizing the cosine similarity between the residual streams after each layer for the range of Looped Transformers visualized in the original figure. Additional models (\raven{} and all retrofitted models) are visualized in \Cref{fig:similar_residual_stream_cosine} for 32 recurrences, demonstrating that the cyclic similarity is consistent for larger numbers of recurrences. We note that \raven{} continues its previous trend of all layer outputs converging to similar representations, with some cyclic similarity still visible. \Cref{fig:similar_attention_patterns_expanded} similarly provides an extended version of \Cref{fig:similar_attention_patterns}, demonstrating that attention matrix cyclic similarity holds to 32 recurrences.

\begin{figure*}[ht]
    \centering
    \includegraphics[width=0.9\linewidth]{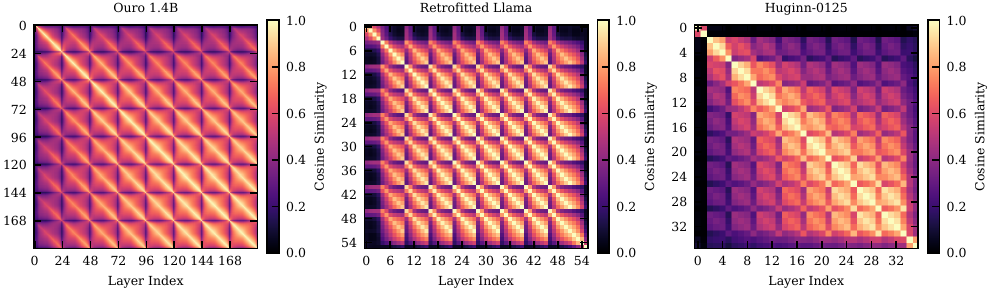}
    \vspace{-0.4cm}
    \caption{\textbf{Cosine similarity between residual streams after every pair of layers for different Transformer models}, averaged across the batch and sequence dimensions. \textbf{Left:} \ouro{} 1.4B \citep{zhu2025scaling}. \textbf{Center:} Retrofitted Llama \citep{mcleish2025teaching}. \textbf{Right:} \raven{} \citep{geiping2025scaling}. All models looped 8 times. Diagonal patterns indicate that the residual stream after each sub-block is most similar to the same block in the next recurrence; every layer in the recurrent block reaches a different fixed point.}
    \label{fig:similar_residual_stream_cosine}
\end{figure*}

\begin{figure}[ht]
    \centering
    \includegraphics[width=\linewidth]{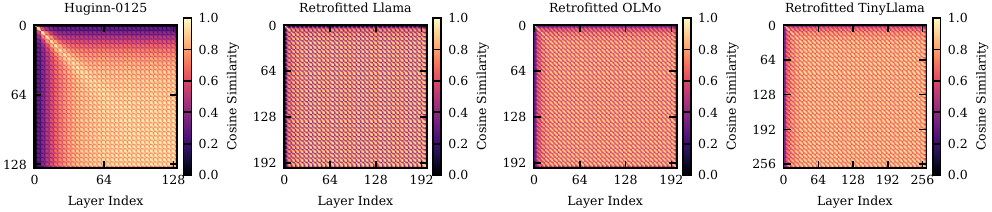}
    \caption{Cosine similarity between residual streams after every pair of layers for different Transformer models, averaged across the batch and sequence dimensions. \textbf{Left:} \raven{} \citep{geiping2025scaling}. \textbf{Center Left:} Retrofitted Llama. \textbf{Center Right:} Retrofitted OLMo. \textbf{Right:} Retrofitted TinyLlama \citep{mcleish2025teaching}. All models looped 32 times. Extended version of \Cref{fig:similar_residual_stream_cosine}.}
    \label{fig:similar_residual_stream_cosine_expanded}
\end{figure}

\begin{figure}[ht]
    \centering
    \includegraphics[width=\linewidth]{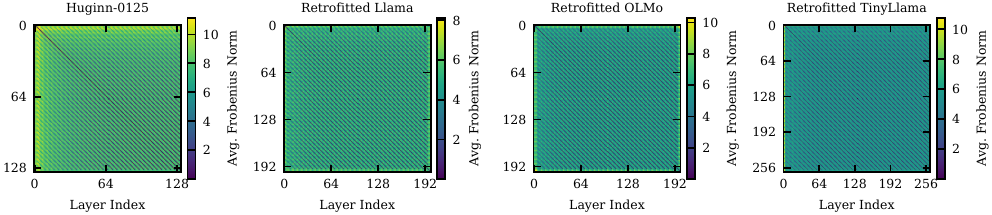}
    \caption{Frobenius norm between attention matrices for different Transformer models, averaged across the batch and head dimensions. \textbf{Left:} \raven{} \citep{geiping2025scaling}. \textbf{Center Left:} Retrofitted Llama. \textbf{Center Right:} Retrofitted OLMo. \textbf{Right:} Retrofitted TinyLlama \citep{mcleish2025teaching}. All models looped 32 times. Extended version of \Cref{fig:similar_attention_patterns}.}
    \label{fig:similar_attention_patterns_expanded}
\end{figure}

\FloatBarrier
\subsection{Fixed Point and Successive Differences}

In \Cref{sec:cyclic_similarity} we demonstrate that retrofitted Llama and \raven{} reach a fixed point but \ouro{} does not. We do so by -- for each layer -- computing an ``approximate fixed point'' after 128 recurrences and then plotting the norm of the difference for each layer at successive recurrences to this fixed point. In \Cref{fig:fixed_point_cosine_similarities} we demonstrate that this same behavior is observed when considering \emph{cosine similarity} to the fixed point, and in \Cref{fig:fixed_point_attention_differences} we see that it holds for attention matrices.

\begin{figure}[ht]
    \centering
    \includegraphics{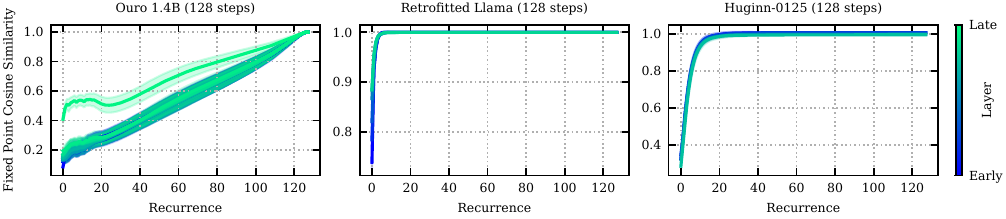}
    \caption{\textbf{Cosine similarity between the residual stream after each layer in the recurrent block and its ``approximate fixed point''} - the residual stream after that layer in the 128th recursion. While \raven{} and retrofitted Llama quickly reach a fixed point, \ouro{} does not - even though the cosine similarity between successive recursions tends towards one, as evidenced by \Cref{fig:fixed_point_cosine_similarities}.}
    \label{fig:fixed_point_cosine_similarities}
\end{figure}

\begin{figure}[ht]
    \centering
    \includegraphics{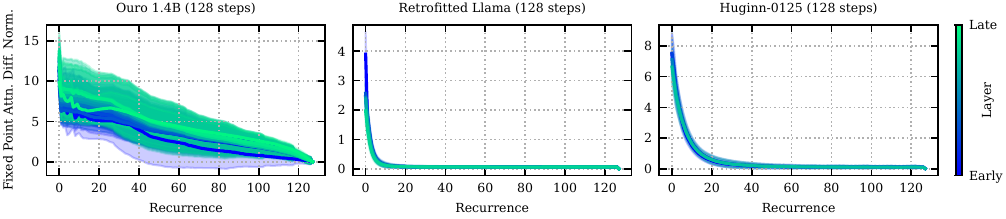}
    \caption{\textbf{Frobenius norm between attention matrices of each layer in the recurrent block and their corresponding ``approximate fixed point''}; the attention matrices of the same layer in the 128th recursion. While \raven{} and retrofitted Llama quickly reach a fixed point, \ouro{} does not.}
    \label{fig:fixed_point_attention_differences}
\end{figure}

We then demonstrated in \Cref{fig:successive_norms} that all models, independent of whether they reach a strict fixed point or not, demonstrate converge towards very low difference norms between successive residual streams. We verify this same behavior holds for residual stream cosine similarities and attention matrix Frobenius norms in \Cref{fig:successive_cosine_similarities,fig:successive_attention_norms} respectively.

\begin{figure}[ht]
    \centering
    \includegraphics{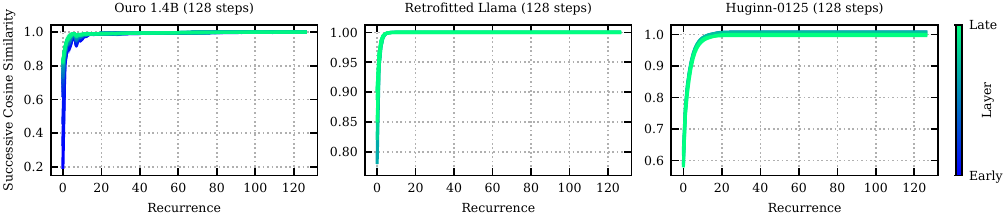}
    \caption{Cosine similarities between successive recursions of the residual stream after the same layer.}
    \label{fig:successive_cosine_similarities}
\end{figure}

\begin{figure}[ht]
    \centering
    \includegraphics{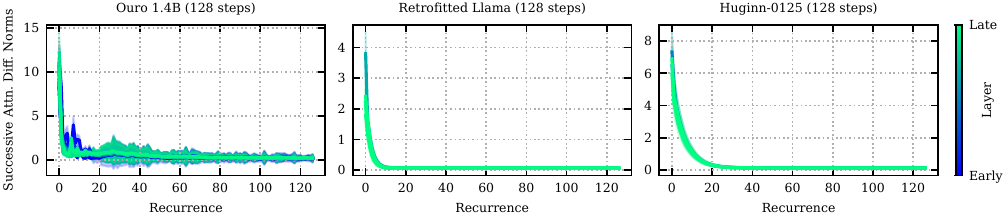}
    \caption{{Frobenius norm between attention matrices of each layer between successive recurrences.}}
    \label{fig:successive_attention_norms}
\end{figure}

\FloatBarrier
\subsection{Latent Space Trajectories}

In this section we visualize additional latent space trajectories, supplementing \Cref{fig:retro_llama_trajectory}. All trajectories are plotted by taking all latent states of the final token position on the test sequence described in \Cref{appen:experiment_details}, computing PCA on these latent state vectors and resultant dimensional reduction of this sequence of vectors. They are intended as illustrations to demonstrate qualitative behavior. \ouro{} is visualized in \Cref{fig:ouro_trajectory} and \raven{} in \Cref{fig:raven_trajectory}.

The \ouro{} trajectory is of particular interest as we are aware from the results in the main body of the paper that this model does not reach a strict fixed point. \Cref{fig:ouro_trajectory} demonstrates that -- on this test sequence -- \ouro{} reaches an approximately constant trajectory, but with visibly larger deviations even at later recurrences than \raven{} in \Cref{fig:raven_trajectory}.

However, we find evidence that this ``approximately stable trajectory'' behavior is not universal: for a simple ``maths'' test prompt (\texttt{The square root of 16 is}) shown in \Cref{fig:ouro_maths_trajectory}, \ouro{} appears to reach a stable trajectory in recurrences 8-16 before departing from this and appearing to become ``unstable''.

\begin{figure}[ht]
    \centering
    \includegraphics{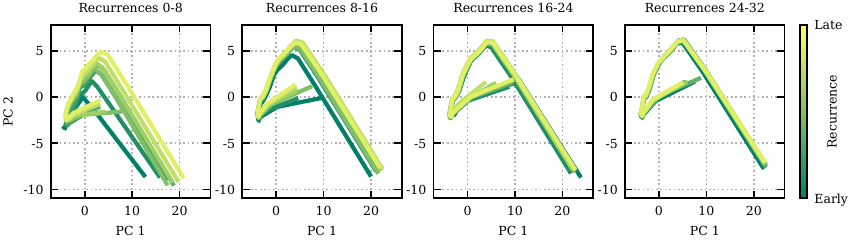}
    \caption{\textbf{\ouro{} 1.4B \citep{zhu2025scaling} latent space trajectory} traced out by the hidden states of the final sequence position on the test prompt; reduced to two dimensions by computing PCA over all final sequence position embeddings.}
    \label{fig:ouro_trajectory}
\end{figure}

\begin{figure}[ht]
    \centering
    \includegraphics{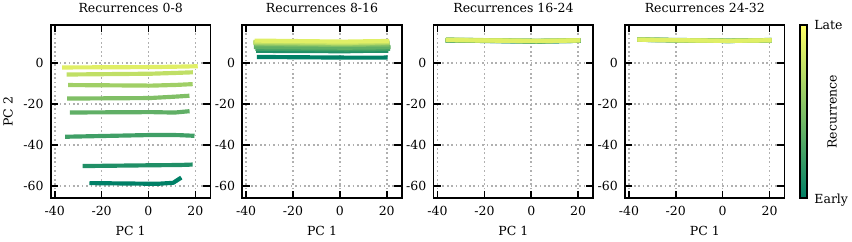}
    \caption{\textbf{\raven{} \citep{geiping2025scaling} latent space trajectory} traced out by the hidden states of the final sequence position on the test prompt; reduced to two dimensions by computing PCA over all final sequence position embeddings.}
    \label{fig:raven_trajectory}
\end{figure}

\begin{figure}[ht]
    \centering
    \includegraphics{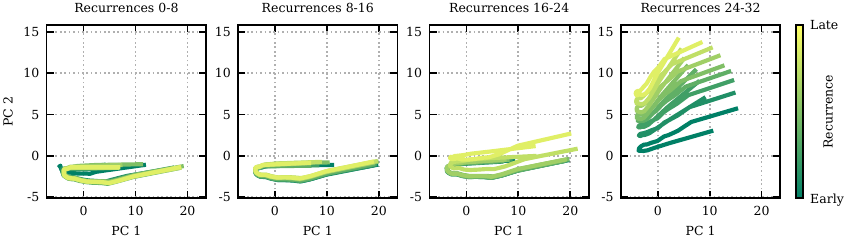}
    \caption{\textbf{\ouro{} 1.4B \citep{zhu2025scaling} latent space trajectory} traced out by the hidden states of the final sequence position on a ``maths'' test prompt (\texttt{The square root of 16 is}); reduced to two dimensions by computing PCA over all final sequence position embeddings.}
    \label{fig:ouro_maths_trajectory}
\end{figure}

\FloatBarrier
\subsection{Architecture Choices}

Here we provide more complete results to supplement \Cref{subsec:norm_and_ii}, visualizing fixed point difference norms and cosine similarities for various architecture choices in \Cref{fig:stability_initialised_fixed_point_norms,fig:stability_initialised_fixed_point_cosine_similarities} respectively.

\begin{figure}[ht]
    \centering
    \includegraphics[width=0.8\linewidth]{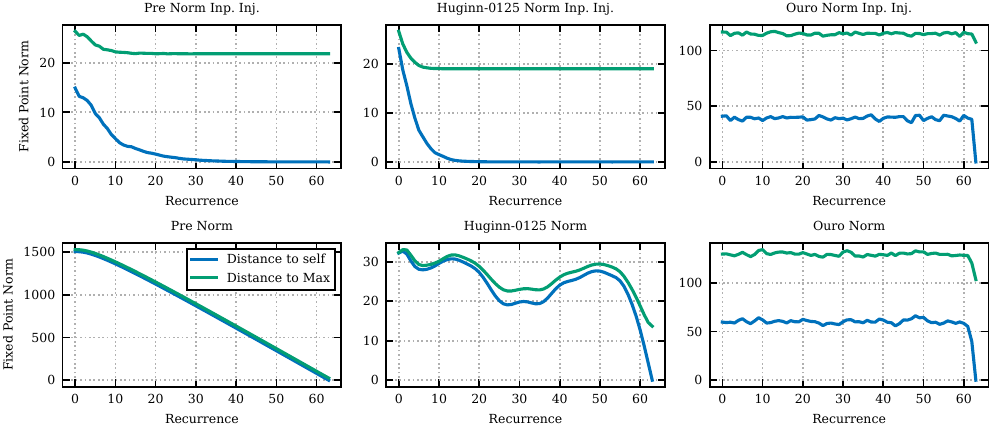}
    \caption{\textbf{Norm of the difference between the residual stream after the first layer in the recurrent block for successive recurrences and an ``approximate fixed point''} -- the residual stream in the 128th recurrence. Two fixed point differences are visualized: the difference to the fixed point of the same (first) layer (blue) and the difference to the fixed point which has the greatest norm difference from the first layer (green). Visualized are a range of norm structures (columns), with input injection (top row) and without (bottom row). All models are randomly initialized with 12 layers in the recurrent block, and no prelude or coda.}
    \label{fig:stability_initialised_fixed_point_norms}
\end{figure}

\begin{figure}[ht]
    \centering
    \includegraphics[width=0.8\linewidth]{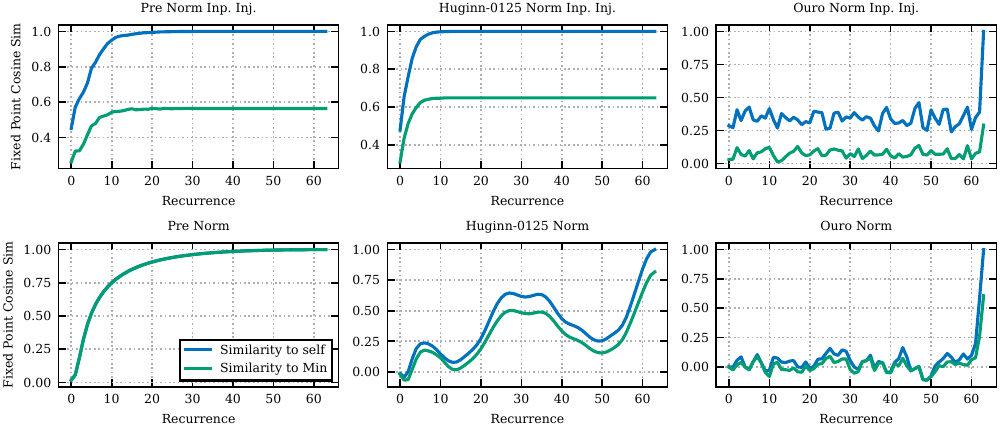}
    \caption{\textbf{Cosine similarity between the residual stream after the first layer in the recurrent block for successive recurrences and an ``approximate fixed point''} -- the residual stream in the 128th recurrence. Two fixed point differences are visualized: the difference to the fixed point of the same (first) layer (blue) and the difference to the fixed point which has the lowest cosine similarity to the first layer (green). Visualized are a range of norm structures (columns), with input injection (top row) and without (bottom row). All models are randomly initialized with 12 layers in the recurrent block, and no prelude or coda.}
    \label{fig:stability_initialised_fixed_point_cosine_similarities}
\end{figure}

We also verify that the results presented are not particular to 12 layers by visualising results for both 4 and 16 layers in \Cref{fig:stability_initialised_fixed_point_cosine_similarities_multiple_layers}, showing qualitatively identical behavior.

\begin{figure}[ht]
    \centering
    \includegraphics[width=0.8\linewidth]{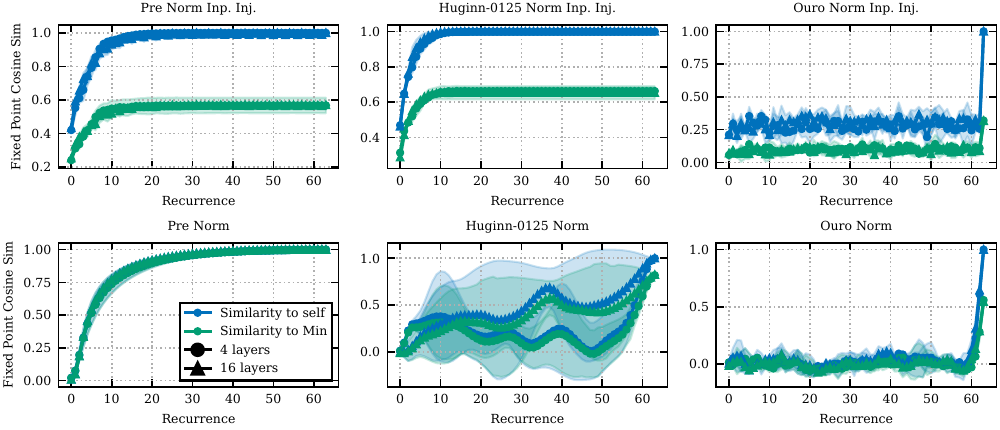}
    \caption{\textbf{Cosine similarity between the residual stream after the first layer in the recurrent block for successive recurrences and an ``approximate fixed point''} -- the residual stream in the 128th recurrence. Two fixed point differences are visualized: the difference to the fixed point of the same (first) layer (blue) and the difference to the fixed point which has the lowest cosine similarity to the first layer (green). Visualized are a range of norm structures (columns), with input injection (top row) and without (bottom row). Here models of 4 and 16 layers are compared, showing qualitatively identical behavior and demonstrating that the behavior is not particular to 12 layers.}
    \label{fig:stability_initialised_fixed_point_cosine_similarities_multiple_layers}
\end{figure}

%% file: sections/appendices/additional_soi_results.tex
In this appendix we explore in more detail the mixing behavior presented in the main body via additional metrics, and considering a wider range of models.

\subsection{Input Independent Metrics}

As our work is concerned largely with how the functionality of layers change throughout depth as the residual stream is iteratively updated, our primary concern is with \emph{input-dependent} measures of stages of inference: ColSum concentration as presented in the main text of the paper is one such input-dependent metric, and the later sections of this appendix will introduce and present results for a wider range of such metrics.

However, to bridge the gap between our work and that of \citet{lad2024remarkable}, we also present results for the fraction of \emph{prediction and suppression} neurons \citep{gurnee2024universal} in successive layers of both feedforward and looped Transformers. We highlight however that \emph{these are unable to change with successive recurrences}, and are thus secondary to our focus. \Cref{fig:feedforward_prediction_neurons} presents results for a selection of feedforward models used throughout the paper, and \Cref{fig:looped_prediction_neurons} presents results for a selection of looped models used throughout the paper. Similar to our results elsewhere, we find that the looped model stages of inference in these metrics tend to mirror those of feedforward models.

\begin{figure*}[ht]
    \centering    
    \includegraphics{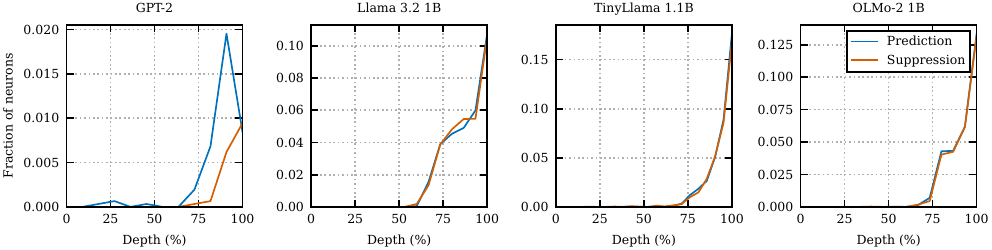}
    \caption{\textbf{Fraction of prediction and suppression neurons in a selection of feedforward models used throughout the paper}.}
    \label{fig:feedforward_prediction_neurons}
\end{figure*}

\begin{figure*}[ht]
    \centering    
    \includegraphics{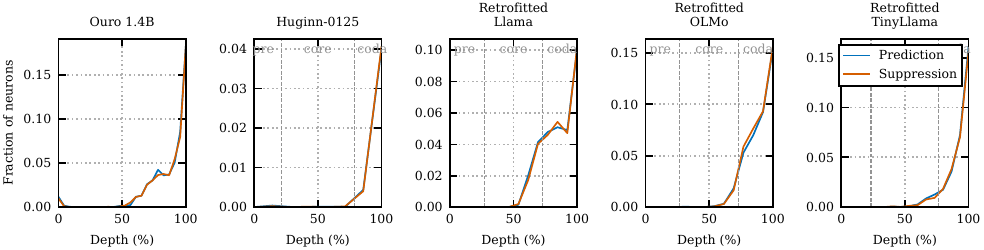}
    \caption{\textbf{Fraction of prediction and suppression neurons in a selection of looped models used throughout the paper}.}
    \label{fig:looped_prediction_neurons}
\end{figure*}

\subsection{Input Dependent Metrics}\label{appen:input_dependent_soi_metrics}

One well-studied phenomenon by which Transformers drastically \emph{reduce} the mixing in given layer is that of the \emph{attention sink} \citep{xiao2023efficient,barbero2025llms}, whereby the layer focuses the majority of the attention ``weight'' onto the first token in the sequence; often this is the BOS (beginning of sequence) token, and thus completely uninformative. To measure this behavior, we adopt the \textbf{attention sink score} and \textbf{sink rate} of \citet{gu2024attention}: for token position $k$ and sequence length $T$, the sink score at layer $\ell$ and head $h$ is defined as:
\begin{equation}
    \text{sink-score}_k^{(\ell,h)} = \frac{1}{T}\sum_{t=0}^{T-1}A_{tk}^{(\ell,h)},
\end{equation}
where $A_{tk}^{(\ell,h)}$ corresponds to the realized attention matrix of \Cref{eq:attn_in_a} at layer $\ell$, head $h$. The sink rate is then defined as the fraction of heads for which the sink score lies above a certain threshold:
\begin{equation}
    \text{sink-rate}^{(\ell)}_k = \frac{1}{H}\sum_{h=1}^H\mathbb{I}\!\left(\text{sink-score}_k^{(\ell,h)}\geq \tau\right),
\end{equation}
where following related work we define a threshold of $\tau=0.3$, and $\mathbb{I}$ denotes the indicator function.

We also adopt the \emph{Mixing score} of \citet{queipo2025attention}: for sequence length $T$, layer $\ell$ and head $h$ this is defined as the average row entropy of the attention matrices:
\begin{equation}
    \text{mixing-score}^{(\ell,h)} = \frac{1}{T} \sum_{i=1}^T H(A_{i, :}^{(\ell, h)}).
\end{equation}

Following \citet{skean2025layer,queipo2025attention} we additionally measure the compression of the residual stream $\mX$ via the matrix-based entropy $H(\mX)$.

In the plots that follow we average sink rates, Mixing scores and ColSum concentrations over all the heads in a layer, and over all input sequences. Residual entropy is averaged over all input sequences.

\FloatBarrier
\subsection{Cyclic Stages of Inference}\label{appen:cyclic_soi}

Supplementing the cyclic recurrence in realized depth for retrofitted Llama in \Cref{fig:stages_of_inference_cyclic_and_similar}, we additionally overlay cycles in realized depth for \ouro{} and \raven{} in \Cref{fig:stages_of_inference_cyclic_extended}, where all models are run for 8 recurrences. This demonstrates that for \textbf{sink rates and mixing scores}, the cyclic behavior and lack of significant depth-wise changes hold both for the other investigated models, and the additional stages of inference metrics.

The results for residual entropy are more nuanced: while the results for \raven{} and the retrofitted models show broadly the same behavior as the attention-based metrics, \ouro{} demonstrates different behavior. As shown in \Cref{fig:ouro_recurrent_block_stages}, the residual entropies both change significantly with successive recurrences and do not closely follow the feedforward behavior. We believe that the divergence from feedforward behavior may derive from the norm structure of this model: the residual stream is normalised after each recurrent block, as visualised in \Cref{fig:residual_norm_comparison}, thus periodically shutting down massive activations. This is not the case for the retrofitted series of models, which lack this norm and show much closer alignment with feedforward stages of inference.

\begin{figure*}[ht]
    \centering    
    \includegraphics{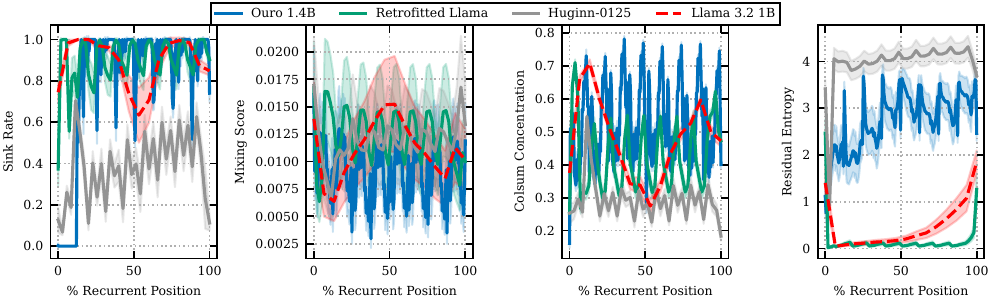}
    \caption{\textbf{Stages of inference for a selection of Looped transformers, all using 8 recurrences}: \raven{} \citep{geiping2025scaling}, Ouro 1.4B \citep{zhu2025scaling} and Llama with retrofitted recurrences \citep{mcleish2025teaching}. Note \raven{} and Retrofitted Llama have prelude and coda layers too: each 2 layers in \raven{} and each 4 in Retrofitted Llama.}
    \label{fig:stages_of_inference_cyclic_extended}
\end{figure*}

For completeness, we plot these stages of inference for all other models referenced in the paper. See \Cref{fig:ouro_recurrent_block_stages} (Ouro 1.4B), \Cref{fig:raven_recurrent_block_stages} (\raven{}), \Cref{fig:retrofitted_llama_recurrent_block_stages,fig:retrofitted_olmo_recurrent_block_stages,fig:retrofitted_tinyllama_recurrent_block_stages} (retrofitted Llama, OLMo and TinyLlama).

\begin{figure}
    \centering    
    \includegraphics{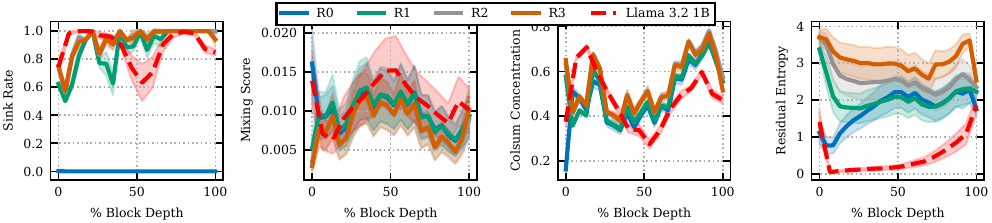}
    \caption{\textbf{Stages of inference for each recurrent loop in Ouro 1.4B.} The close overlap with feedforward stages of inference is a particularly striking result as this model is trained from scratch with recurrence.}
    \label{fig:ouro_recurrent_block_stages}
\end{figure}

\begin{figure}
    \centering    
    \includegraphics{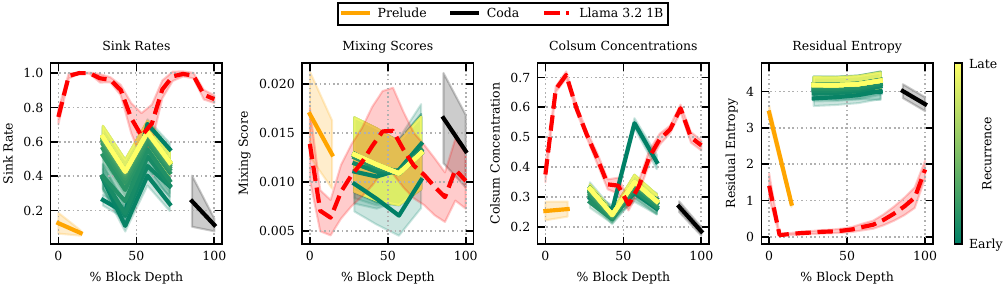}
    \caption{\textbf{Stages of inference for each recurrent loop in \raven{}.} This represents a negative result: stages of inference do not occur. We discuss possible causes for this in the main text.}
    \label{fig:raven_recurrent_block_stages}
\end{figure}

\begin{figure}
    \centering    
    \includegraphics{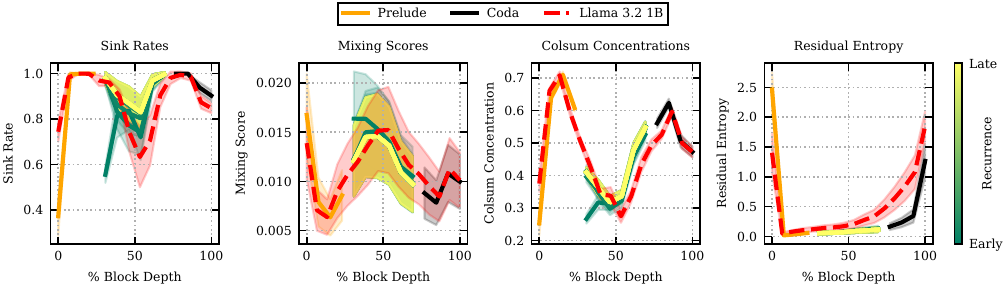}
    \caption{\textbf{Stages of inference for each recurrent loop in the retrofitted Llama model \citep{mcleish2025teaching}.} Each block demonstrates very similar stages of inference to Llama, the base model from which pretrained layers are taken.}
    \label{fig:retrofitted_llama_recurrent_block_stages}
\end{figure}

\begin{figure}
    \centering    
    \includegraphics{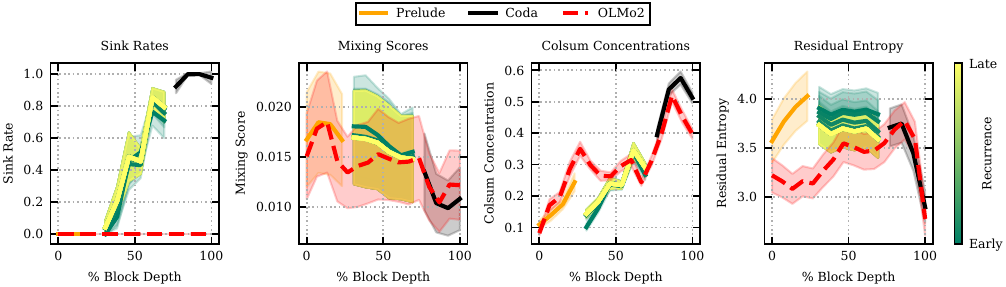}
    \caption{\textbf{Stages of inference for each recurrent loop in the retrofitted OLMo model \citep{mcleish2025teaching}.} Similarly, each block demonstrates very similar stages of inference to OLMo, the base model from which pretrained layers are taken.}
    \label{fig:retrofitted_olmo_recurrent_block_stages}
\end{figure}

\begin{figure}
    \centering    
    \includegraphics{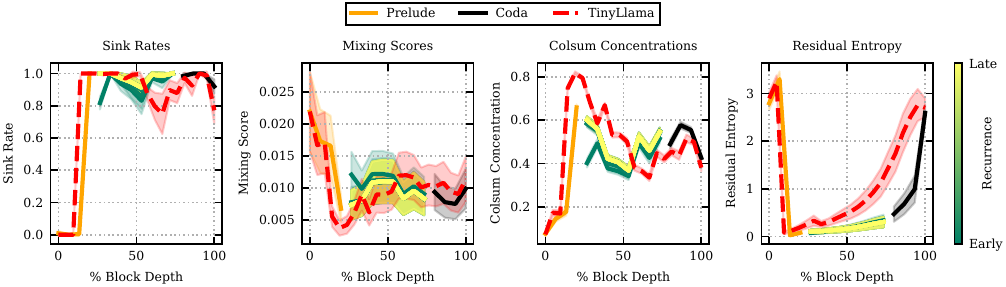}
    \caption{\textbf{Stages of inference for each recurrent loop in the retrofitted TinyLlama model \citep{mcleish2025teaching}.} Similarly, each block demonstrates very similar stages of inference to TinyLlama, the base model from which pretrained layers are taken.}
    \label{fig:retrofitted_tinyllama_recurrent_block_stages}
\end{figure}

We plot stages of inference for \ouro{} 2.6B in \Cref{fig:ouro_2_recurrent_block_stages}. This model is interesting due to the training regime followed by \citet{zhu2025scaling}, which ``upcycles'' a 48 layer model from the 24 layer 1.4B parameter model. As a consequence, the first and second half of each recurrent block each independently align with the Llama feedforward stages of inference.

\begin{figure}
    \centering    
    \includegraphics{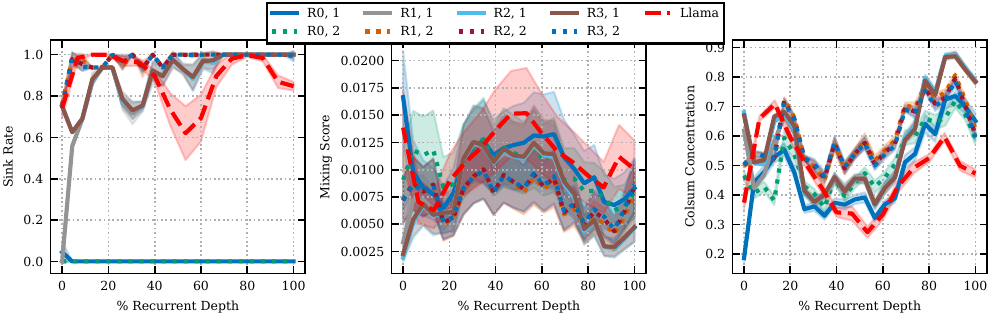}
    \caption{\textbf{Stages of inference for each recurrent loop in Ouro 2.6B.} For this model we separate out the first and second half of the recurrent block and overlay them, demonstrating that both halves have close alignment with the Llama feedforward stages of inference. We suggest that this likely arises due to the training regime of \citet{zhu2025scaling}, which first trains a single 24 layer 1.4B parameter model, and then ``upcycles'' this into a 48 layer 2.6B parameter model by duplicating these layers, consequently duplicating the stages of inference as well.}
    \label{fig:ouro_2_recurrent_block_stages}
\end{figure}

In \Cref{sec:stages_of_inference} we suggested that the lack of stages of inference in \raven{} is likely due to the normalization of the residual stream resulting in massive activations being unable to form. Here we further support this suggestion by ablating the massive activations from the \rllama{} model (which \emph{does} display stages of inference) via zeroing the output of the MLP in the second layer, which is responsible for its massive activations. In this setting, visualized in \Cref{fig:ablation_rllama}, we see that the model no longer exhibits stages of inference comparable to the feedforward model, suggesting that the presence of massive activations is required for stages of inference to emerge in looped models.

\begin{figure}
    \centering    
    \includegraphics{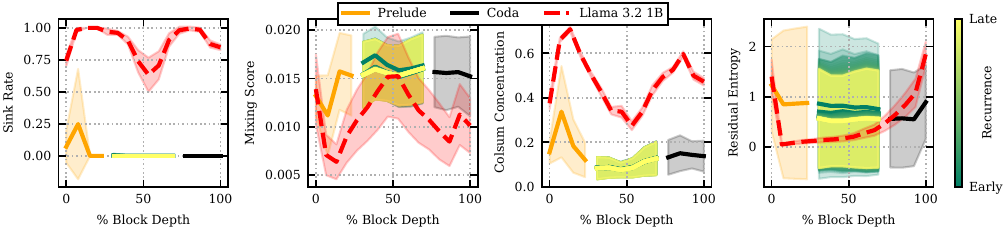}
    \caption{\textbf{Stages of inference for each recurrent loop in \rllama{} for which the massive activations have been ablated.} }
    \label{fig:ablation_rllama}
\end{figure}

\FloatBarrier
\subsection{Non-Reasoning Stages of Inference}\label{appen:hellaswag_soi}

Throughout the rest of the paper, experiments are conducted on the GSM8k dataset. In this appendix we verify that the stages of inference we observe are not specific to this dataset, and also occur in a non-reasoning setting, for which we use the HellaSwag dataset \citep{zellers2019hellaswag}. We follow an identical experimental setup to the GSM8k experiments, running inference on 256 random examples from the test split.

We present results in \Cref{fig:ouro_recurrent_block_stages_hellaswag} (Ouro 1.4B), \Cref{fig:raven_recurrent_block_stages_hellaswag} (\raven{}), \Cref{fig:retrofitted_llama_recurrent_block_stages_hellaswag,fig:retrofitted_olmo_recurrent_block_stages_hellaswag,fig:retrofitted_tinyllama_recurrent_block_stages_hellaswag} (retrofitted Llama, OLMo and TinyLlama). These show very few deviations from the GSM8k results, and the conclusions throughout the rest of the paper hold. However, we highlight the following small differences:
\begin{itemize}
    \item Across the board, sink rates tend to be higher in the HellaSwag setting.
    \item In \rolmo{}, ColSum concentration appears to be slightly higher in the HellaSwag setting.
\end{itemize}

\begin{figure}[ht]
    \centering    
    \includegraphics{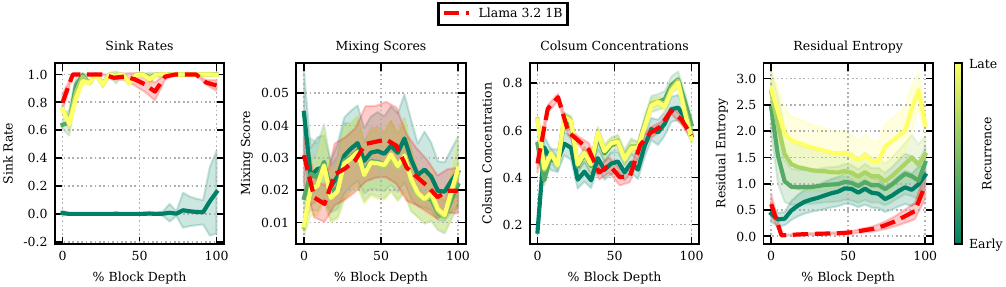}
    \caption{\textbf{Stages of inference for each recurrent loop in Ouro 1.4B, run on the HellaSwag dataset.}}
    \label{fig:ouro_recurrent_block_stages_hellaswag}
\end{figure}

\begin{figure}[ht]
    \centering    
    \includegraphics{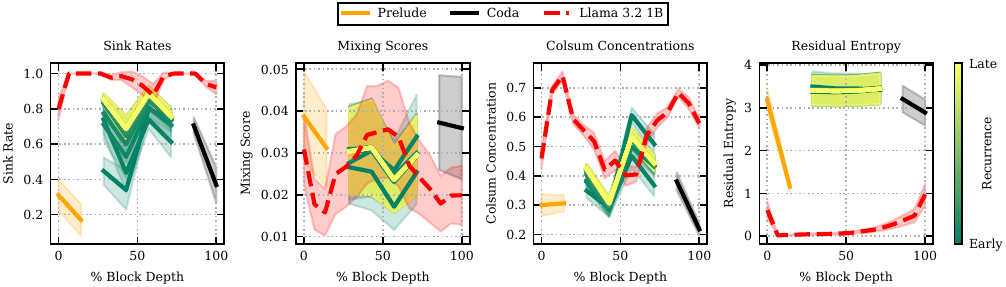}
    \caption{\textbf{Stages of inference for each recurrent loop in \raven{}, run on the HellaSwag dataset.}}
    \label{fig:raven_recurrent_block_stages_hellaswag}
\end{figure}

\begin{figure}[ht]
    \centering    
    \includegraphics{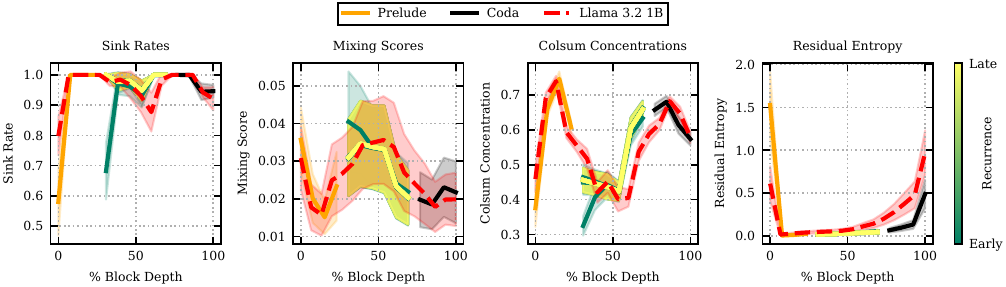}
    \caption{\textbf{Stages of inference for each recurrent loop in the retrofitted Llama model \citep{mcleish2025teaching}, run on the HellaSwag dataset.}}
    \label{fig:retrofitted_llama_recurrent_block_stages_hellaswag}
\end{figure}

\begin{figure}[ht]
    \centering    
    \includegraphics{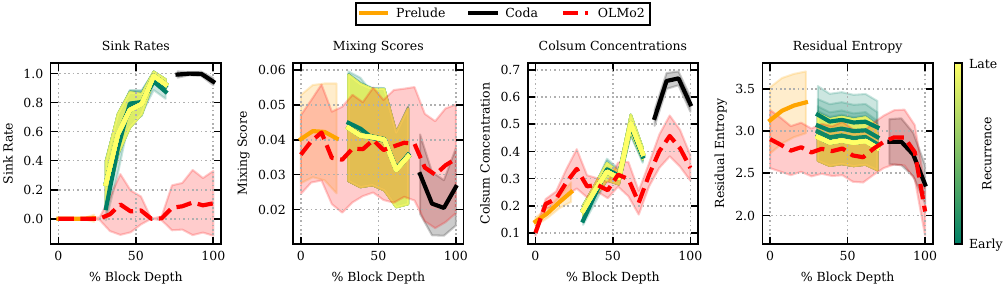}
    \caption{\textbf{Stages of inference for each recurrent loop in the retrofitted OLMo model \citep{mcleish2025teaching}, run on the HellaSwag dataset.} ColSum concentration deviates slightly from its GSM8k counterpart here, but still broadly follows the same stages of inference as the feedforward OLMo model.}
    \label{fig:retrofitted_olmo_recurrent_block_stages_hellaswag}
\end{figure}

\begin{figure}[ht]
    \centering    
    \includegraphics{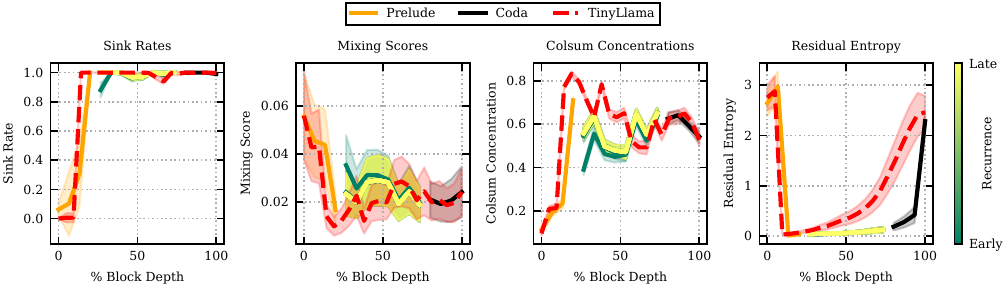}
    \caption{\textbf{Stages of inference for each recurrent loop in the retrofitted TinyLlama model \citep{mcleish2025teaching}, run on the HellaSwag dataset.}}
    \label{fig:retrofitted_tinyllama_recurrent_block_stages_hellaswag}
\end{figure}

\FloatBarrier
\subsection{Stability To Unseen Test-Time Recurrences}

This section extends the results presented in \Cref{sec:stability}.

We supplement \Cref{fig:stability_in_total_recurrence} by plotting how stages of inference change per-layer throughout recurrences for additional models: these can be found in \Cref{fig:stages_of_inference_ouro_stability,fig:stages_of_inference_raven_stability,fig:stages_of_inference_retro_llama_stability}. The large standard deviations in \raven{} and retrofitted Llama mixing scores reflect the fact that these models tend to reach different, but still stable, constant states.

\begin{figure}[ht]
    \centering    
    \includegraphics{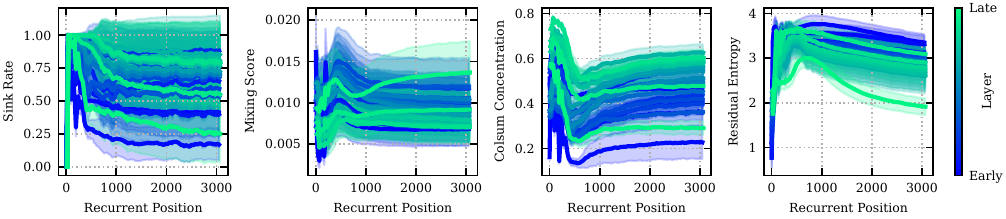}
    \caption{\textbf{Stages of inference for each of the distinct blocks in \ouro{} \citep{zhu2025scaling}}, as they are reapplied throughout the model for 128 recurrences. These consistently change throughout the realized depth of the model, reaching no clear fixed point. Mean and standard deviation are over separate inputs to the model, taken over the GSM8k subset.}
    \label{fig:stages_of_inference_ouro_stability}
\end{figure}

\begin{figure}[ht]
    \centering    
    \includegraphics{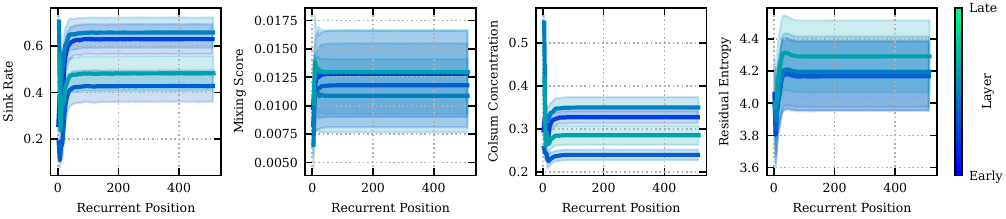}
    \caption{\textbf{Stages of inference for each of the distinct blocks in \raven{} \citep{geiping2025scaling}}, as they are reapplied throughout the model for 128 recurrences. These converge to constant behavior. Mean and standard deviation are over separate inputs to the model, taken over the GSM8k subset.}
    \label{fig:stages_of_inference_raven_stability}
\end{figure}

\begin{figure}[ht]
    \centering    
    \includegraphics{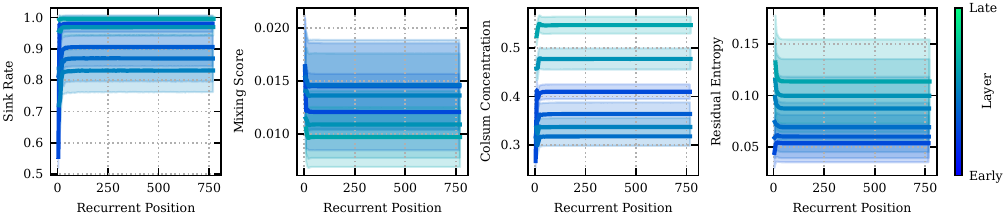}
    \caption{\textbf{Stages of inference for each of the distinct blocks in retrofitted Llama \citep{mcleish2025teaching},} as they are reapplied throughout the model for 128 recurrences. These converge to constant behavior. Mean and standard deviation are over separate inputs to the model, taken over the GSM8k subset.}
    \label{fig:stages_of_inference_retro_llama_stability}
\end{figure}

We additionally plot the extended versions of \Cref{fig:stability_in_percentage_depth} in \Cref{fig:stages_of_inference_ouro_stability_recurrence,fig:stages_of_inference_raven_stability_recurrence,fig:stages_of_inference_retro_llama_stability_recurrence}.

\begin{figure}[ht]
    \centering    
    \includegraphics{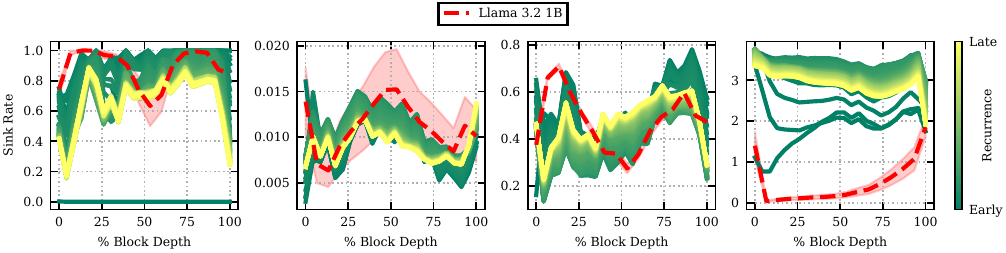}
    \caption{\textbf{Stages of inference for \ouro{} with each of 128 recurrences}, visualized over percentage recurrent depth. These consistently change with successive recurrences, deviating significantly from the stages of inference seen with train-time recurrences.}
    \label{fig:stages_of_inference_ouro_stability_recurrence}
\end{figure}

\begin{figure}[ht]
    \centering    
    \includegraphics{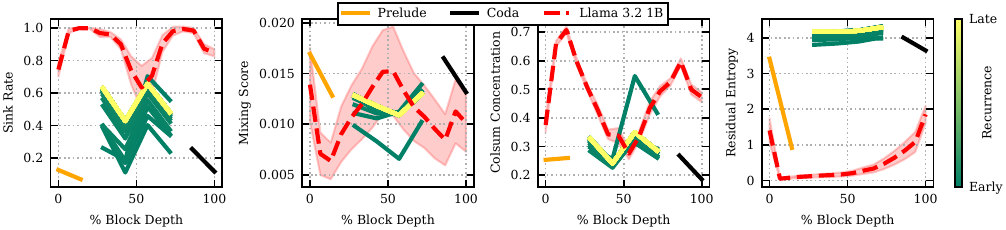}
    \caption{\textbf{Stages of inference for \raven{} with each of 128 recurrences}, visualized over percentage recurrent depth. These quickly reach a fixed point and do not deviate far from their starting stages of inference.}
    \label{fig:stages_of_inference_raven_stability_recurrence}
\end{figure}

\begin{figure}[ht]
    \centering    
    \includegraphics{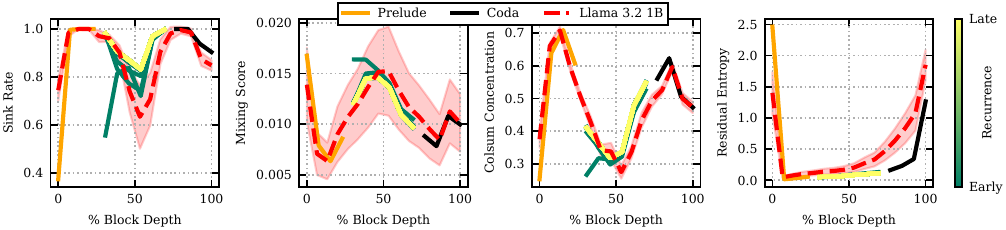}
    \caption{\textbf{Stages of inference for retrofitted Llama with each of 128 recurrences}, visualized over percentage recurrent depth. These quickly reach a fixed point and do not deviate far from their starting stages of inference.}
    \label{fig:stages_of_inference_retro_llama_stability_recurrence}
\end{figure}

\begin{figure}[ht]
    \centering    
    \includegraphics{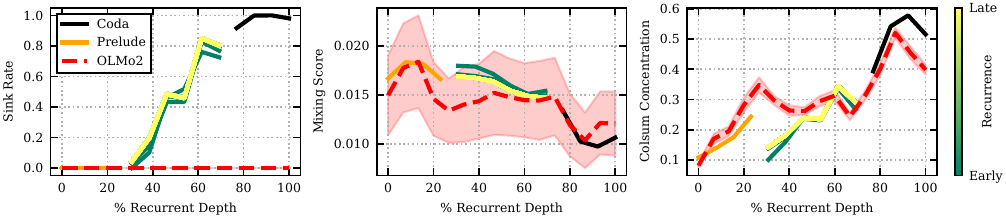}
    \caption{\textbf{Stages of inference for retrofitted OLMo-2 with each of 128 recurrences}, visualized over percentage recurrent depth. These quickly reach a fixed point and do not deviate far from their starting stages of inference.}
    \label{fig:stages_of_inference_retro_olmo_stability_recurrence}
\end{figure}

\FloatBarrier
\subsection{How Architecture Choices Affect the Formation of Stages of Inference}\label{appen:architecture_stages_of_inference}

Here we present additional results to supplement those in \Cref{sec:self_organising_soi}. The extended stages of inference metrics for the models visualized in \Cref{fig:summary_training_stages_of_interest} are presented in \Cref{fig:r4s,fig:r8s,fig:r12s}. Equivalent models with added input injection are visualized in \Cref{fig:r4si,fig:r8si,fig:r12si}, and equivalent models without sandwich layers in \Cref{fig:r4,fig:r8,fig:r12}.

It appears in these small scale experiments that feedforward stages of inference are most closely replicated without input injection, and using sandwich layers, as in \Cref{fig:r4s,fig:r8s,fig:r12s}. The addition of input injection appears to mean that the \emph{final} recurrence follows the feedforward stages of inference more closely, but to the detriment of earlier recurrences.

\begin{figure}[ht]
    \centering
    \includegraphics{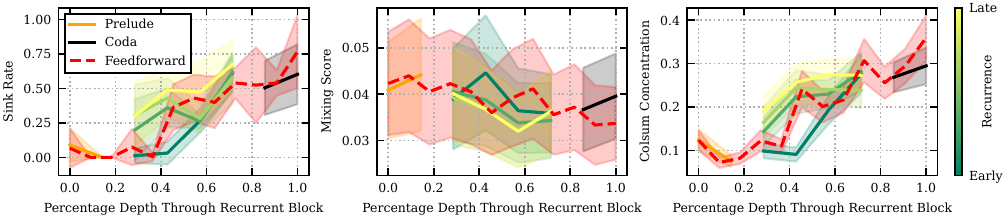}
    \caption{\textbf{Stages of inference metrics for a small-scale Looped Transformer of configuration $(2,4 \otimes 4,2)$}, compared to a ``control'' feedforward Transformer with the same training configuration and depth 12.}
    \label{fig:r4s}
\end{figure}

\begin{figure}[ht]
    \centering
    \includegraphics{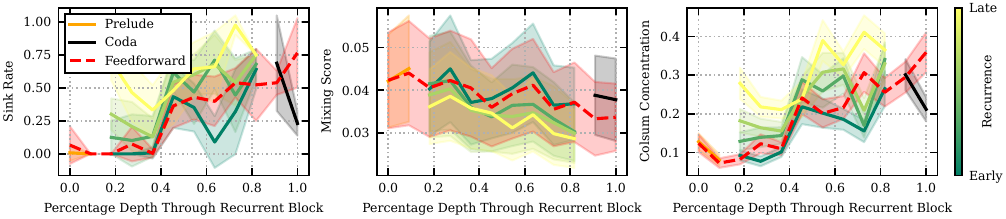}
    \caption{\textbf{Stages of inference metrics for a small-scale Looped Transformer of configuration $(2,8 \otimes 4,2)$}, compared to a ``control'' feedforward Transformer with the same training configuration and depth 12.}
    \label{fig:r8s}
\end{figure}

\begin{figure}[ht]
    \centering
    \includegraphics{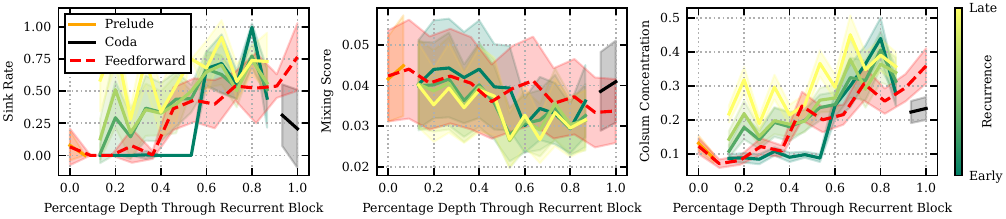}
    \caption{\textbf{Stages of inference metrics for a small-scale Looped Transformer of configuration $(2,12 \otimes 4,2)$}, compared to a ``control'' feedforward Transformer with the same training configuration and depth 12.}
    \label{fig:r12s}
\end{figure}


\begin{figure}[ht]
    \centering
    \includegraphics{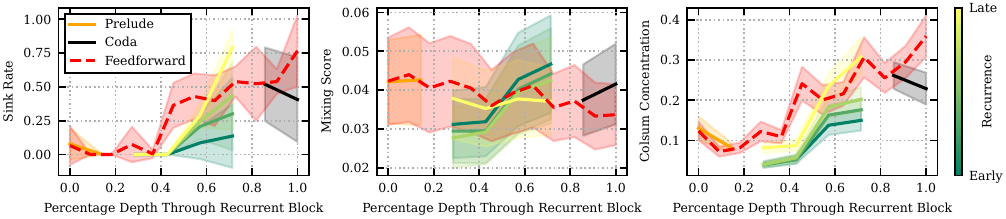}
    \caption{\textbf{Stages of inference metrics for a small-scale Looped Transformer of configuration $(2,4 \otimes 4,2)_I$}, compared to a ``control'' feedforward Transformer with the same training configuration and depth 12.}
    \label{fig:r4si}
\end{figure}

\begin{figure}[ht]
    \centering
    \includegraphics{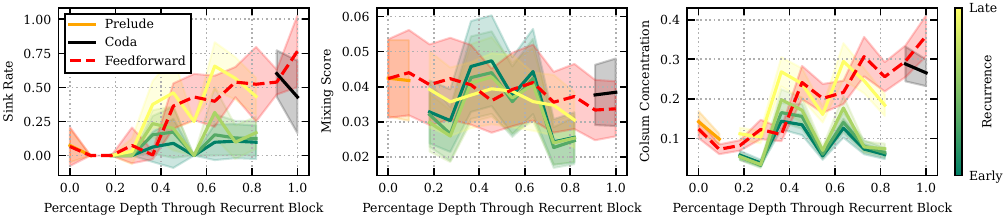}
    \caption{\textbf{Stages of inference metrics for a small-scale Looped Transformer of configuration $(2,8 \otimes 4,2)_I$}, compared to a ``control'' feedforward Transformer with the same training configuration and depth 12.}
    \label{fig:r8si}
\end{figure}

\begin{figure}[ht]
    \centering
    \includegraphics{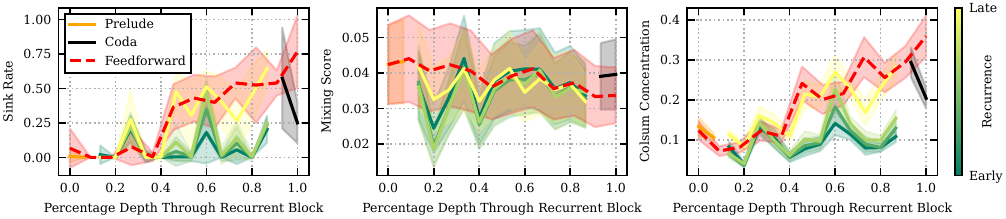}
    \caption{\textbf{Stages of inference metrics for a small-scale Looped Transformer of configuration $(2,12 \otimes 4,2)_I$}, compared to a ``control'' feedforward Transformer with the same training configuration and depth 12.}
    \label{fig:r12si}
\end{figure}


\begin{figure}[ht]
    \centering
    \includegraphics{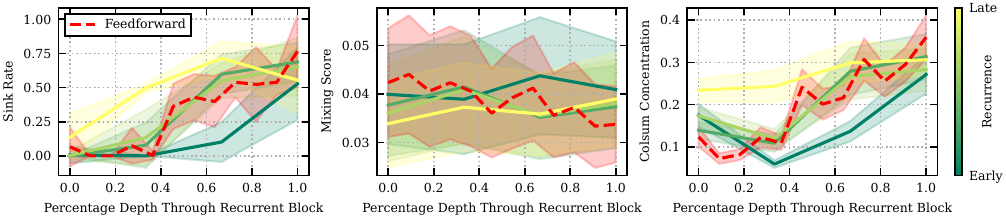}
    \caption{\textbf{Stages of inference metrics for a small-scale Looped Transformer of configuration $(0,4 \otimes 4,0)$}, compared to a ``control'' feedforward Transformer with the same training configuration and depth 12.}
    \label{fig:r4}
\end{figure}

\begin{figure}[ht]
    \centering
    \includegraphics{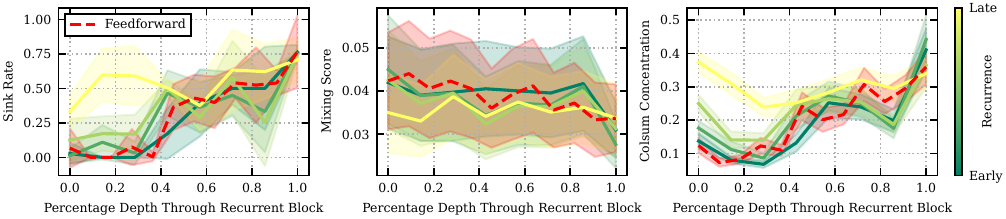}
    \caption{\textbf{Stages of inference metrics for a small-scale Looped Transformer of configuration $(0,8 \otimes 4,0)$}, compared to a ``control'' feedforward Transformer with the same training configuration and depth 12.}
    \label{fig:r8}
\end{figure}

\begin{figure}[ht]
    \centering
    \includegraphics{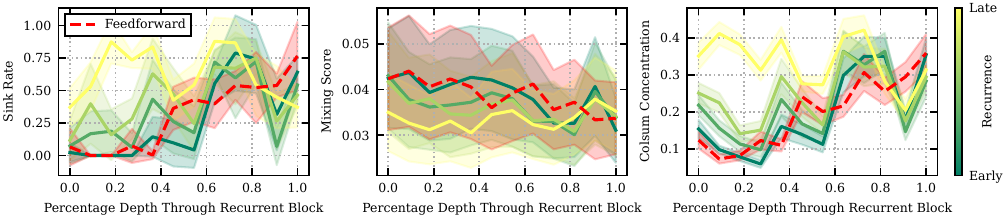}
    \caption{\textbf{Stages of inference metrics for a small-scale Looped Transformer of configuration $(0,12 \otimes 4,0)$}, compared to a ``control'' feedforward Transformer with the same training configuration and depth 12.}
    \label{fig:r12}
\end{figure}

%% file: sections/appendices/looped_floorplan.tex
To illustrate the ``similar attention patterns between recurrences'' that we have discussed throughout the paper, in \Cref{fig:retrofitted_llama_floorplan} we visualize all attention patterns for the Retrofitted Llama model on the test prompt. Increasing depth in the model is aligned with increased height up the page; Prelude and coda are outlined in blue and red respectively and separate recurrences are separated by a space.

\begin{figure}[ht]
    \centering
    \includegraphics[height=0.7\textheight]{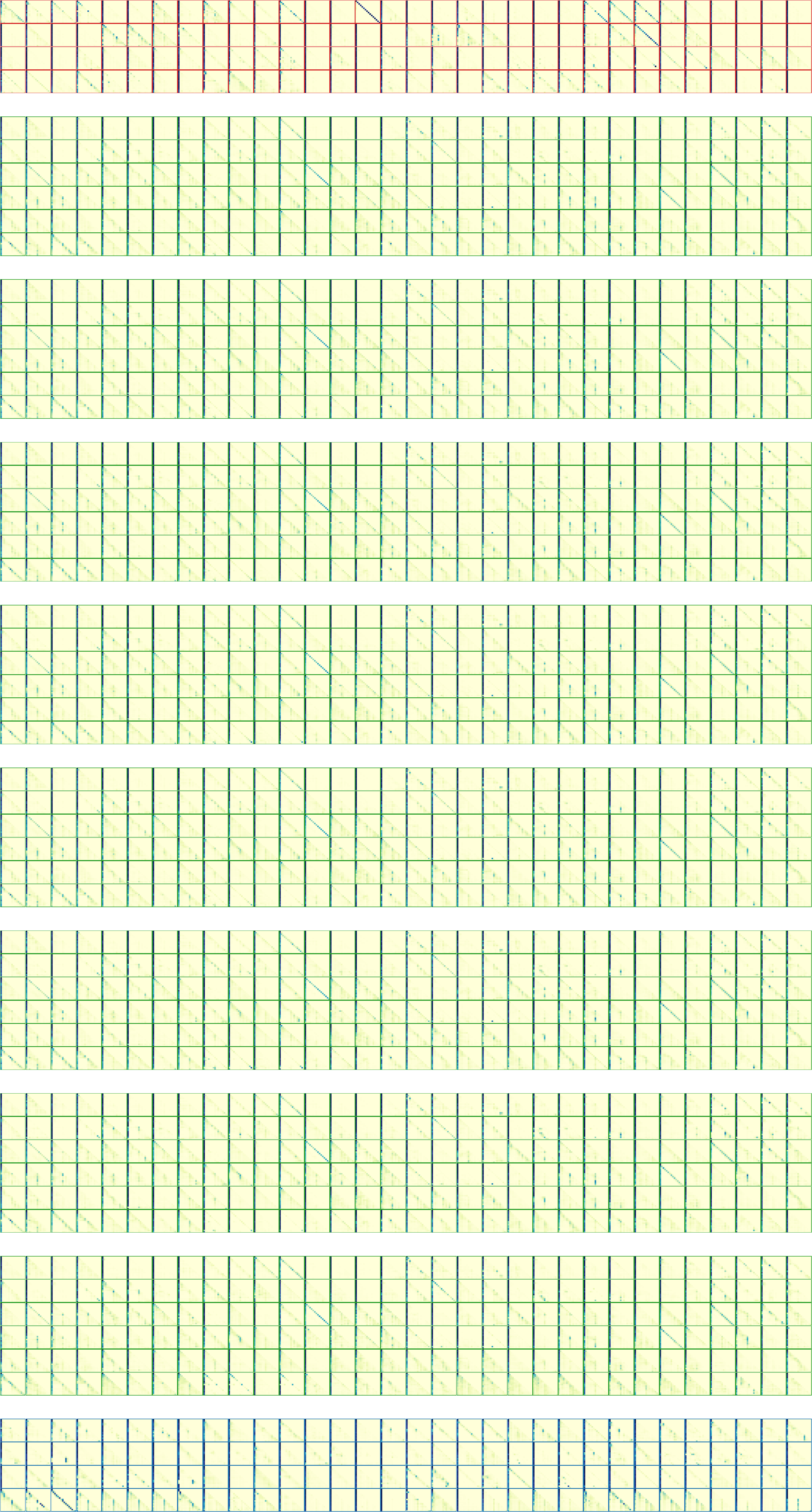}
    \caption{\textbf{Entire attention pattern floorplan for the retrofitted Llama model}, illustrating the cyclic similarity between recurrences.}
    \label{fig:retrofitted_llama_floorplan}
\end{figure}